\newtheorem{remark}{Remark}
\newcommand{\X}{\mathcal{X}}
\newcommand{\Y}{\mathcal{Y}}
\newcommand{\Xspl}{\mathbf{X}}
\newcommand{\Yspl}{\mathbf{Y}}
\newcommand{\V}{\mathcal{V}}
\newcommand{\Sh}[2]{\mathcal{S}\left(#1, #2\right)}
\newcommand{\Half}{\mathcal{H}}
\newcommand{\C}{\mathcal{C}}
\newcommand{\B}{\mathcal{B}}
\newcommand{\Prj}{\mathcal{P}}
\newcommand{\A}{\mathbf{A}}
\newcommand{\Ar}{\A^{\!\!*}}
\newcommand{\ft}{\widetilde{f}}
\newcommand{\real}{\mathbb{R}}
\newcommand{\expectation}{\mathbb{E}}
\newcommand{\ex}[1]{{\expectation}\left[#1\right]}
\newcommand{\expec}{{\expectation} \,}
\newcommand{\expecf}[1]{{\expectation_{#1}} \,}
\newcommand{\norm}[1]{\left\|#1\right\|}
\newcommand{\abs}[1]{\left|#1\right|}
\newcommand{\paren}[1]{\left(#1\right)}
\newcommand{\Tr}{\mathbf{\top}}
\newcommand{\diameter}[1]{\Delta_n\left(#1\right)}
\newcommand{\diam}[1]{\Delta_n^2\left(#1\right)}
\newcommand{\diamX}{\Delta^2_{\X}}
\newcommand{\ind}[1]{\mathds{1}_{#1}}
\DeclareMathOperator*{\argmin}{argmin}
\DeclareMathOperator*{\level}{level}
\newcommand{\lev}[1]{\level\left(#1\right)}
\DeclareMathOperator*{\Expectation}{\expectation}
\title{Escaping the curse of dimensionality with a tree-based regressor}
\author{Samory Kpotufe\\ UCSD CSE\\
{\tt skpotufe@cs.ucsd.edu}}
\begin{document}
\maketitle

\begin{abstract}
We present the first tree-based regressor whose convergence rate depends only on the intrinsic dimension of the data, namely its Assouad dimension. The regressor uses the RPtree partitioning procedure, a simple randomized variant of $k$-$d$ trees. 
\end{abstract}

\section{Introduction}
Non-parametric learning algorithms tend to suffer from what is referred to as the curse of dimensionality, namely that prediction performance deteriorates dramatically as the number of features increases. This phenomenon is quantifiable in the case of regression algorithms: as initially shown by Stone \cite{S:60, S:61}, if we only assume that the regression function $f(x)$ is Lipschitz \footnote{Stone's result concerns a much larger class of regression functions; here we focus on Lipschitz conditions.} in $\real^D$, then no non-parametric estimator can achieve a convergence rate faster than $n^{-2/(2+D)}$. In other words, the number of points required to attain a low risk may be exponential in $D$, and this is infeasible even for moderate values of $D$.
\par
However, it is often the case that data which appears high dimensional, actually conforms to a structure of low intrinsic dimensionality (interpreted broadly). Examples of such situations are traditional continuous settings where the data is close to a low dimensional submanifold of $\real^D$, and discrete settings such as when the data is sparse. These are all examples of data with low Assouad dimension (see definition \ref{def:assouad}); this notion of dimension thus offers a natural and broad model of intrinsic data complexity.
\par
We show that, for any input data distribution, the risk of a regressor based on RPtree (a variant of $k$-$d$ tree) depends just on the unknown Assouad dimension of the data, regardless of the ambient dimension $D$. This is the first such result for tree-based regression.

\begin{figure}
\centering
\mbox{
\hspace{-0.02cm}
\subfigure[Dyadic tree] 
{
\includegraphics[height=2.2cm]{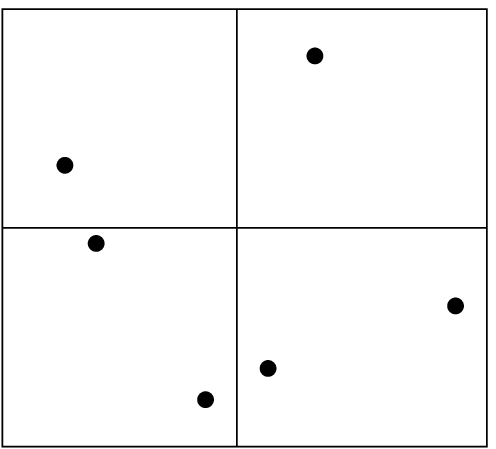}
\label{fig:1a}
}
\hspace{0.02cm}
\subfigure[$k$-$d$ tree] 
{
\includegraphics[height=2.2cm]{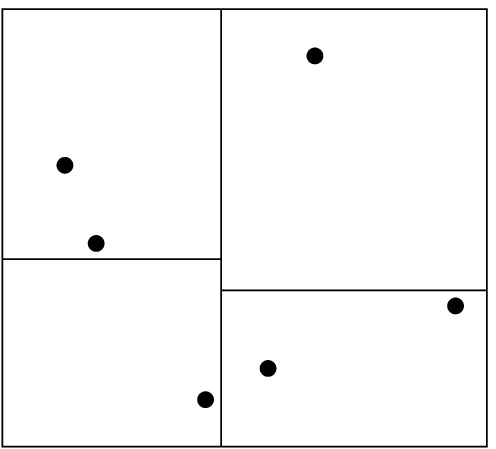}
\label{fig:1b}
}
\hspace{0.02cm}
\subfigure[RPtree] 
{
\includegraphics[height=2.2cm]{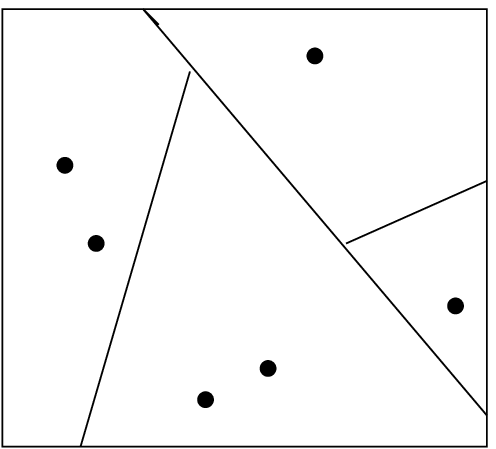}
\label{fig:1c}
}
}
\caption{Spatial partitioning induced by various splitting rules. Two levels or the tree are shown for each.}
\label{fig:1}
\end{figure}

\subsection{Tree-based regression}
Tree-based regression consists of first building a hierarchy of nested partitions of the data space (the tree), and then learning a piecewise continuous function $f_n$ over the cells of some chosen partition in the hierarchy. Future evaluations of $f_n(x)$ can be done in time just $O(\log n)$ by navigating the usually shallow tree down to an appropriate cell. These methods are popular due to their ease of use and computational efficiency (e.g. CART, dyadic trees, $k$-$d$ tree, see \cite{GN:67, SN:66, DGL:73}), but none has been shown to adapt to intrinsic dimensionality in terms of their regression risk. See figure \ref{fig:1} for some examples.
\par
The Random Projection tree (RPtree) is a hierarchical partitioning procedure which recursively bisects the data space with random hyperplanes (see figure \ref{fig:1c}). Although RPtree's connections to intrinsic data dimensionality has been studied in unsupervised settings (\cite{DF:59, GLZ:73}), its use for regression has not been explored. 
\par
Using RPtrees for regression requires a method for selecting a partition on which to learn the regressor $f_n$. Selecting a good partition from the hierarchy is essential to balancing the bias and variance of the regressor. Traditional methods use penalized empirical risk minimization over all possible partitions induced by the tree. Our approach can be more efficient in practice. We grow the tree in careful steps that enable us to quickly identify a small set of candidate partitions. We then provide a couple of options for selecting the final partition: one is to use cross-validation over the candidate partitions, another is a criterion which allows to automatically stop growing the tree when a good partition is attained. The latter method is computationally cheaper, while the former method results in a slightly better risk. In both cases, the excess risk of the RPTree regressor depends just on the unknown Assouad dimension of the input space, for all distributions.
\par
On the technical side, RPtree regression requires novel techniques for analyzing the bias of the estimator. Estimator bias is well understood to decrease with the diameters of the partition's cells. Unfortunately these \emph{physical diameters} are hard to assess for RPtrees given the random and irregular shapes of the cells, and in fact they may not decrease at all. However, we can track the diameters of the data within the cells, and we develop new techniques to relate these empirical \emph{data diameters} to the estimator's bias. We believe these techniques are of independent interest as they take focus away from the cells' physical diameters, thus opening the door to richer partitioning rules whose cell diameters are hard to control.


\subsection{Background and related work}
The realization that data is often less complex than indicated by the ambient dimension has spurred a significant body of work (referred to as manifold learning) that aim to embed the data into a low dimensional euclidean space (see e.g. \cite{RS:62, BN:63, TDL:64}). A possible approach to regression on high dimensional data
is to first reduce dimension using manifold learning and learn the regressor in the new space. Unfortunately, this approach is not guaranteed to work since pertinent information may be lost by the embedding. This raises the following natural question: can learning methods such as regression adapt automatically to data that has low intrinsic dimensionality while operating in the original space $\real^D$?
\par 
An important result in the direction of adaptive regression is the realization by Bickel and Li \cite{BL:65} that standard kernel regressors are adaptive in the following sense:
there exists an appropriate bandwidth setting such that the asymptotic pointwise risk at $x\in\real^D$ depends just on the manifold dimension and on the behavior of the kernel in a neighborhood of $x$. One then has to search for the appropriate bandwidth setting, either by estimating the manifold dimension or through cross validation over all possible values of this dimension (see e.g. \cite{BL:65, LW:68}).
\par
Kernel regressors can be expensive in practice: the kernel weights must be computed anew at each training point in order to evaluate the regressor on a new data point. This translates into an evaluation time of $\Omega(n)$ which is often a burden given large samples. Contrast this with the $O(\log n)$ evaluation time of tree-based regressors.
\par
In the case of classification, a recent result by Scott and Nowak (\cite{SN:66}) for dyadic decision trees is related: they show that if the input data is drawn from an approximately uniform measure on a manifold, and the Bayes decision boundary is sufficiently smooth, DDTs achieve classification rates that depend just on the manifold dimension. It is unclear whether their result will apply in a distribution free regression setting.

\section{Detailed overview of results}

We're given i.i.d training data $(\Xspl, \Yspl)=\{(X_i, Y_i)\}_{i= 1}^n$ $\in (\X\times\Y)^n$, where the input space $\X\subset\real^D$ is contained in a ball\footnote{We assume a Euclidean $l_2$ norm in this work.} of (unknown) diameter $\Delta_{\X}$, and the output space $\Y\subset \real^{D'}$ is contained in a ball of (unknown) diameter $\Delta_{\Y}$.

\subsection{Assouad dimension}
We model the intrinsic dimensionality of the space $\X$ using the notion of Assouad dimension defined below.
\par 
\begin{definition}
\label{def:assouad}
The Assouad dimension (or doubling dimension) of $\X\subset\real^D$ is the smallest $d$ such that for any ball $B\subset \real^D$, the set $B\cap \X$ can be covered by $2^d$ balls of half the radius of $B$.
\end{definition}

The Assouad dimension has proved useful in capturing the intrinsic complexity of data spaces as shown in various works on data analysis (see e.g. \cite{IN:70,BKL:71, C:74}).\\
It coincides with the natural notions of dimension of various geometric objects: it is easy to see that $d$-dimensional cubes, spheres, all have Assouad dimension $O(d)$ (see e.g. \cite{C:74}). It also captures notions of data complexity that are standard in the machine learning and statistics communities; this is stated in the following remarks for emphasis. 

\begin{remark}
 A $d$-dimensional hyperplane in $\real^D$ has Assouad dimension $O(d)$ (see \cite{C:74}).
\end{remark}

\begin{remark}
A $d$-dimensional Riemannian submanifold of $\real^D$ has Assouad dimension $O(d)$, subject to a bound on its curvature (see theorem 22 of \cite{DF:59}).
\end{remark}

\begin{remark}
A $d$-sparse data space in $\real^D$, i.e. one where each data point has at most $d$ non zero coordinates, has Assouad dimension $O(d\log D)$: it can be described by $\binom{D}{d}\leq D^d$ hyperplanes of dimension $d$.
\end{remark}

\begin{figure}
\centering
\mbox{
\hspace{-0.02cm}
\subfigure[Sparse data set.] 
{
\includegraphics[height=2cm, width = 2.5cm]{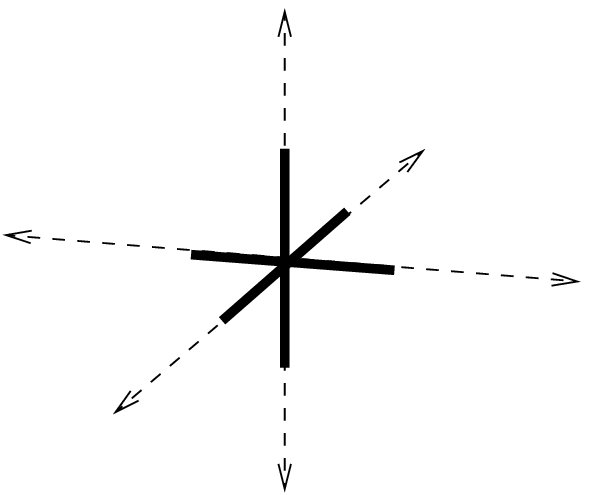}
\label{fig:sparseData}
}
\hspace{0.5cm}
\subfigure[2-$d$ manifold.] 
{
\includegraphics[height=2.3cm, width = 2.5 cm]{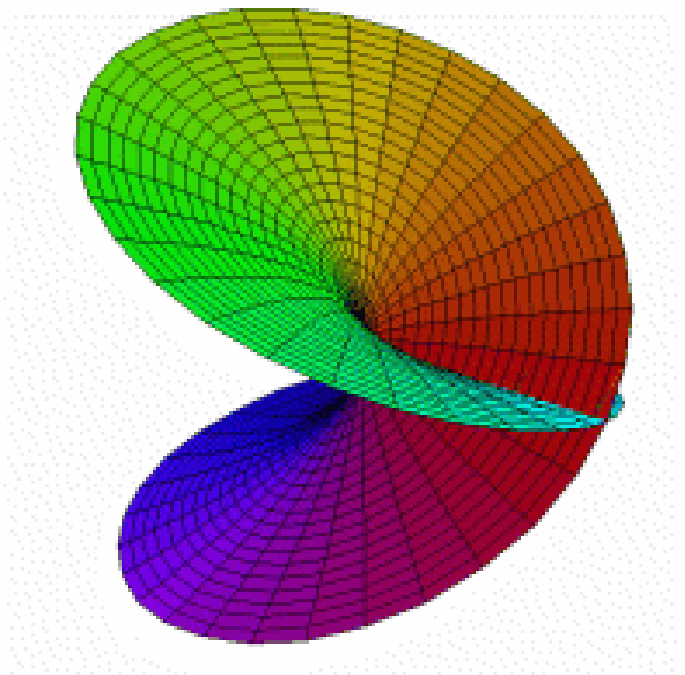}
\label{fig:manifold}
}
}
\caption{Examples of data with low Assouad dimension.}
\end{figure}

\subsection{Notions of diameter}
\label{sec:diameters}
Let $\A$ be some partition of $\X$. Traditionally, 
bias analysis revolves around the \emph{physical diameters} $\Delta(A) \doteq \displaystyle\max_{x, x' \in A} \norm{x-x'}$ of cells $A\in\A$ (see e.g. \cite{GN:67, SN:66, DGL:73}). In this work we instead relate bias to the \emph{data diameters} of the cells, that is $\Delta_n(A) \doteq \displaystyle\max_{x, x' \in A\cap \Xspl} \norm{x-x'}$ or $0$ if $A\cap\Xspl = \emptyset$. 
\begin{figure}[h]
 \centering
\resizebox{3.5cm}{!}{\input{Diameters.pstex_t}}
\end{figure}
\par 
Focusing on data diameter has the following advantage. We never need to evaluate the physical diameters of the cells, and these need not decrease. Consequently, we don't have to constrain the partition to regular shaped cells (e.g. axis parallel hyper-rectangles) whose physical diameters are easily controlled. In particular, it opens the door to richer partitioning rules such as RPtree which adapt better to the data complexity at the expense of creating irregular cells. We expand on this last point in the example below.
\par 
Consider a data space of the following form:
 $$\cup_{i\neq j}\{te_i \,\pm\, \varepsilon e_j: t\in [-1, 1]\},\,i, j\in [D], \text{for a fixed $\varepsilon <\!\!< 1$}.$$
This is an extreme case of a noisy sparse data set of Assouad dimension $O(\log D)$, depicted in figure \ref{fig:sparseData}. We'd like to partition this space in a way that reduces the data diameters of the cells (for low estimator bias) while achieving a small partition size (for low estimator variance). 
Axis parallel splitting rules such as $k$-$d$ trees or dyadic trees would require a number of cells exponential in $D$ in order to halve the diameters. Yet, the set itself can be partitioned into at most $2D^2$ cells of half its radius. The richness of random splits allows us to achieve a partitioning just a bit larger than this, even in the worst case over distributions on the set. In fact, given any data set of Assouad dimension $d$, RPtrees are guaranteed to achieve a partition of size at most $2^{\widetilde{O}(d)}$, such that the data diameters of each cell is at most half of the diameter of the full data set. We refer the reader to \cite{DF:59} for a detailed analysis.
\par 
We'll soon see that, for low estimator bias, we don't need every cell of a partition to have small data diameter, but rather that these diameters are small in an average sense. Given a collection $\A$ of disjoint subsets of $\X$, we define the following notion of average data diameter:
\begin{eqnarray*}
\Delta_n(\A) &\doteq& \paren{\frac{\sum_{A\in \A}\mu_n(A) \diam{A}}{\sum_{A\in\A}\mu_n(A)}}^{1/2}, 
\end{eqnarray*}
where $\mu_n$ is the empirical measure over $\Xspl$ (we'll let $\mu$ denote the marginal measure over $\X$).

\subsection{Regression setup}
We assume that the regression function $f(x) = \ex{Y|X = x}$ is $\lambda$-Lipschitz, for an unknown parameter $\lambda$:
$$\forall x, x' \in \X, \, \norm{f(x) - f(x')}\leq \lambda \norm{x-x'}.$$
\par 
For any function $g(x): \X\mapsto \Y$, the $l_2$ pointwise risk at $x$ satisfies
\begin{eqnarray*}
R(g(x)) \doteq \expecf{Y} \norm{Y-g(x)}^2 = R(f(x)) + \norm{f(x) - g(x)}^2, 
\end{eqnarray*}
and the integrated risk can then be written as
\begin{eqnarray*}
R(g) \doteq\expecf{X}R(g(X)) = R(f) + \expecf{X}\norm{f(X) - g(X)}^2.
\end{eqnarray*}
Thus, the pointwise excess risk of $g(x)$ over $f(x)$ is simply $\norm{f(x) - g(x)}^2$. In this paper we'll be interested in the integrated excess risk 
\begin{equation*}
 \norm{f -g}^2 \doteq R(g) - R(f) = \expecf{X}\norm{f(X) - g(X)}^2.
\end{equation*} 

\subsection{Choosing a good partition for regression}
\begin{procedure*}[t]
$\A^0\leftarrow \{\X\}$\;
\For{$i\leftarrow 1$ \KwTo $\infty$}{
  \ForEach{cell $A\in\A^{i-1}$} {
     \CommentSty{// Create a subtree rooted at $A$:}\\
$l \leftarrow \lev{A}$ in the current tree \tcp*[l]{Root is at level 0}
     (subtree rooted at $A$) $\leftarrow \mathtt{coreRPtree}\paren{A,\,\Delta_n(A)/2,\delta, l}$\; 
  }
$\A^i\leftarrow$ partition of $\X$ defined by the leaves of the current tree\;
$\lev{\A^i}\leftarrow \max_{A\in \A^i}\lev{A}$ \;
\vspace{1mm}
\CommentSty{// At this point we have two options for stopping and returning a partition.}\\
\vspace{1mm}

\CommentSty{\fbox{Option 1: Cross-validation}}\\
\vspace{0.5mm}
\If {$\Delta_n\paren{\A^i} = 0$ or $\lev{\A^i}\geq \log n^2$} {
Draw test sample $(\Xspl', \Yspl')$ of size $n$ and define $R_n'(\cdot)$ as the empirical risk over the test sample\;
$\, \Ar\leftarrow \displaystyle\argmin_{\A^j\in\{\A^0, \ldots, \A^i\}} R_n'(f_{n,\A^j})$\;
\vspace{0.5mm}
\Return $f_n\doteq f_{n,{\Ar}}$\;
 }
\CommentSty{\fbox{Option 2: Automatic stopping}}\\
\vspace{0.5mm}
$\alpha(n) \leftarrow \paren{\log^2 n}\log\log(n/\delta)+ \log (1/\delta)$\;
\vspace{0.5mm}
  \If {$\lev{\A^i}\geq \log \left({n\cdot\diam{\A^i}}/{\alpha(n)\diam{\X}}\right)$} {
\vspace{0.5mm}
$\,\Ar\leftarrow \displaystyle\argmin_{\A^j \in \left\{\A^{i-1},\, \A^i\right\}} \paren{\frac{\alpha(n)}{n}\cdot \abs{\A^j} + \diam{\A^j}}$\;
\vspace{0.5mm}
\Return $f_n \doteq f_{n,\Ar}$\;
  }
}
\caption{adaptiveRPtree(sample $\Xspl$, confidence parameter $\delta$)\label{proc:1}}
\end{procedure*}

A tree-based regressor works in two phases. The partitioning phase returns a partition $\A$ of the data space $\X$ and a final regressor is learned as a piecewise continuous function over the cells of $\A$. In this work we'll consider a piecewise constant regressor over the returned partition $\A$ defined as follows:
\par 
\noindent For $x\in \X$, let $\A(x)$ be the cell of $\A$ to which $x$ belongs. If $\mu_n(\A(x)) > 0$, the regressor is obtained as
\begin{equation*}
f_{n, \A} (x) \doteq 
\frac{\sum_{i= 1}^n Y_i \cdot \ind{X_i\in \A(x)} }{n\cdot\mu_n(\A(x))},
\end{equation*}
otherwise use a default setting $f_{n, \A}(x) = y_0 \in \Y$ whenever $\A(x)$ is empty of training points. We'll often refer to the final regressor as $f_n(\cdot)$ as long as the partition used for the estimate is clear from context.
\par 
Procedure $\mathtt{adaptiveRPtree}$ makes calls to the the subprocedure $\mathtt{coreRPtree}$ which implements the basic RPtree splits. We defer the complete treatment of this subprocedure to section \ref{sec:coreRPtree} since most of the analysis will concern $\mathtt{adaptiveRPtree}$. For now, note that the call to $\mathtt{coreRPtree}$ returns a subtree rooted at $A$ with the following property: let $\A$ be the collection of subsets of $\real^D$ defined by the leaves of this subtree, we have $\Delta_n(\A) \leq \Delta_n(A)/2$. Also, the implementation of $\mathtt{coreRPtree}$ ensures that the final tree built by $\mathtt{adaptiveRPtree}$ has height at most $6\log n$. 
\par
Procedure $\mathtt{adaptiveRPtree}$ grows the tree in steps $\A^0$, $\A^1, \ldots$, where 
$\Delta_n\paren{\A^{i+1}}\leq\Delta_n\paren{\A^{i}}/2$, and eventually returns one of the partitions $\A^i$ for some $i$. We present a couple of options for selecting a good partition to return. The first option uses cross-validation: grow a large tree and prune it back by minimizing empirical risks over an i.i.d test sample $(\Xspl', \Yspl')$ of size $n$. The other option is that of automatic stopping: we return a partition as soon as some stopping condition is met.
\par
The two options for selecting the return partition are outlined in procedure $\mathtt{adaptiveRPtree}$. The empirical risk in the cross-validation option is defined as 
\begin{equation*}
 R_n'(g) \doteq \frac{1}{n}\sum_{i\in[n]}\norm{Y_i' - g(X_i')}^2.
\end{equation*}
\par
The automatic stopping option returns one of two partitions and requires no test sample. It is a computationally faster option and, as we'll see, the resulting bounds are only marginally worsened.

\subsection{Main Results}

\begin{definition}
\label{def:decRate}
Given a sample $\Xspl$, we say that $\mathtt{adaptiveRPtree}$ attains a diameter decrease rate of $k$ on $\Xspl$ for $k \geq d$, if every call to the subprocedure $\mathtt{coreRPtree}\paren{A,\diameter{A}/2, \delta, l}$ in the second loop of the procedure returns a tree rooted at $A$ of depth at most $k$.
\end{definition}

\begin{theorem}
Assume that $\X$ has Assouad dimension $d$. There exist constants $C, \,C'$ independent of $d$ and $\mu(\X)$, such that the following holds. 
\par
Suppose the cross-validation option is used. Define $$\alpha(n)\doteq\paren{\log^2 n}\log\log(n/\delta)+ \log (1/\delta),$$ and assume $n\geq \max \left\{\paren{{\lambda\Delta_\X}/{\Delta_\Y}}^2, \alpha(n)\right\}.$
With probability at least $1-\delta$, the algorithm attains a 
diameter decrease rate of $k\leq C'd\log d$, and the excess risk of the regressor satisfies
\begin{eqnarray*}
 \norm{f_n - f}^2 &\leq& C\cdot\paren{\lambda\Delta_\X}^{2k/(2+k)}\paren{\frac{\Delta_{\Y}^2\cdot\alpha(n)}{n}}^{2/(2+k)} \\
&\,& + 2\Delta_\Y^2\sqrt{\frac{\ln\log n^6 + \ln 3/\delta}{2n}}. 
\end{eqnarray*}
\label{theo:main1}
\end{theorem}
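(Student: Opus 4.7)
The plan is built around three ingredients, each of which is then combined in a straightforward optimization. First, an oracle-style bound that, for every candidate partition $\A^i$ produced by the algorithm, bounds $\norm{f_{n,\A^i}-f}^2$ by a bias term driven by the average data diameter $\diam{\A^i}$ plus a variance term proportional to $\abs{\A^i}\Delta_\Y^2\alpha(n)/n$. Second, a diameter-decrease bound showing that, under Assouad dimension $d$, every call to $\mathtt{coreRPtree}$ returns a subtree of depth at most $k=C'd\log d$, so that $\abs{\A^i}\le 2^{ik}$ while $\diam{\A^i}\le 4^{-i}\diamX$. Third, a cross-validation argument showing that the returned $\Ar$ has excess risk within the announced Hoeffding slack of $\min_i\norm{f_{n,\A^i}-f}^2$. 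Plugging (ii) into (i) and optimizing over $i$ will give the stated rate.

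The heart of the argument is the oracle inequality, and this is where the data-diameter formulation is essential. For a fixed partition $\A$ I would decompose $\norm{f_{n,\A}-f}^2$ as its $\mu_n$-integrated counterpart plus a transfer term $\int\norm{f-f_{n,\A}}^2\,d(\mu-\mu_n)$. On the empirical side, for any pair $x,X_i$ in the same cell $A\in\A$, the Lipschitz property gives $\norm{f(x)-f(X_i)}\le\lambda\diameter{A}$; writing $f_{n,\A}(x)=\mu_n(A)^{-1}\sum_{X_i\in A}Y_i$, splitting $Y_i=f(X_i)+(Y_i-f(X_i))$, and invoking concentration on the per-cell noise averages yields an empirical bias of order $\lambda^2\diam{\A}+\abs{\A}\Delta_\Y^2/n$, up to a logarithmic factor from the union bound across cells. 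The $\mu_n\!\to\!\mu$ transfer is then handled by uniform deviation bounds for the cell masses, carried out \emph{uniformly over the polynomially many partitions the procedure can produce} --- this is what ultimately forces the $\paren{\log^2 n}\log\log(n/\delta)$ contribution inside $\alpha(n)$. The empty-cell default value $y_0$ contributes only a $\Delta_\Y^2\sum_A\mathds{1}\{\mu_n(A)=0\}\mu(A)$ correction that is absorbed into the variance term using standard empirical-process bounds, given $n\ge\alpha(n)$.

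For the diameter-decrease bound I would appeal directly to the RPtree analysis of \cite{DF:59}: in any cell whose sample has Assouad dimension at most $d$, $O(d\log d)$ levels of random-projection splits suffice to halve the data diameter with probability at least $1-\delta/\mathrm{poly}(n)$, and a union bound over the at most $n$ calls to $\mathtt{coreRPtree}$ upgrades this to the uniform statement with $k\le C'd\log d$. Substituting $\abs{\A^i}\le 2^{ik}$ and $\diam{\A^i}\le 4^{-i}\diamX$ into the oracle inequality and balancing the two terms at $i^\ast\approx \tfrac{1}{2+k}\log_2\paren{n\lambda^2\diamX/(\alpha(n)\Delta_\Y^2)}$ produces the rate $(\lambda\Delta_\X)^{2k/(2+k)}\paren{\Delta_\Y^2\alpha(n)/n}^{2/(2+k)}$; the assumption $n\ge(\lambda\Delta_\X/\Delta_\Y)^2$ guarantees that this $i^\ast$ falls in the range the algorithm actually explores before either the depth cap $\log n^2$ or total data-diameter collapse terminates the loop. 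Finally, the cross-validation step selects $\Ar$ from at most $\log n^6$ candidates; Hoeffding applied to the $\Delta_\Y^2$-bounded excess loss $\norm{Y-g(X)}^2-\norm{Y-f(X)}^2$ together with a union bound gives the announced $\Delta_\Y^2\sqrt{(\ln\log n^6+\ln 3/\delta)/(2n)}$ slack, so the empirical-risk minimizer over $\{\A^0,\dots,\A^i\}$ attains population risk within this slack of $\A^{i^\ast}$. The main obstacle I anticipate is the oracle inequality itself: the classical physical-diameter proofs (e.g.\ \cite{GN:67,DGL:73}) do not apply because RPtree cells have no useful deterministic diameter control, so the bias has to be tied directly to the random quantity $\diam{\A}$ and the concentration step has to be made uniform over a sample-dependent family of partitions --- this is the step that dictates the precise form of $\alpha(n)$ in the final bound.
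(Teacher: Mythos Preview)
Your three-ingredient plan matches the paper's structure (Lemma~\ref{lem:risk} for the oracle bound, Corollary~\ref{cor:diameter-decrease-max} for the decrease rate, and the McDiarmid-plus-union-bound cross-validation step), and your optimization over $i$ is exactly Lemma~\ref{lem:partition01}. The gap is in the oracle inequality itself, specifically in the step you flag as the main obstacle. Your proposed decomposition into a $\mu_n$-integral plus a transfer term $\int\norm{f-f_{n,\A}}^2\,d(\mu-\mu_n)$ is fine, and on the $\mu_n$ side you do get bias $\le\lambda^2\diam{\A}$ because both $x$ and the $X_i$ are sample points. But the transfer term cannot be handled by ``uniform deviation bounds for the cell masses'': the integrand $\norm{f(x)-f_{n,\A}(x)}^2$ is not constant on cells, and on a non-empty cell $A$ its supremum over $x\in A$ is governed by $\lambda^2\Delta^2(A)$ (the physical diameter), not $\lambda^2\Delta_n^2(A)$. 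Any attempt to pass from $\mu$ to $\mu_n$ cell-by-cell therefore reintroduces the very quantity you are trying to avoid; and a crude bound $\sum_A\Delta_\Y^2\abs{\mu(A)-\mu_n(A)}$ leaves a non-vanishing $O(\Delta_\Y^2)$ term after summation.

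The paper closes this gap with a device you have not identified: the \emph{alternate partition} $\A'$ (Section~3.2). Each non-empty cell $A\in\A$ is intersected with a ball from a fixed multi-resolution cover $\B$ of $\X$, chosen so that $A_1'=A\cap B$ has physical diameter $\le 8\Delta_n(A)$ (or $\le O(n^{-2/(2+d)}\Delta_\X)$ at the finest scale) and $A_2'=A\setminus B$ contains no sample points. One then applies the classical physical-diameter bias bound (Lemma~\ref{lem:bias}) to $\A'$, not to $\A$, and Lemma~\ref{lem:properties_A'} converts this back to $\diam{\A}$ plus the harmless $n^{-4/(2+d)}\diamX$ term. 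The mass transfer is then done on the cells of $\A'$, which lie in a fixed class $\C$ (intersections of $O(\log n)$ half-spaces from a pre-drawn pool $\Prj$ with balls from $\B$), so the relative VC bound of Lemma~\ref{lem:relativeVC} applies and yields the $\alpha(n)$ in the statement. This alternate-partition trick is the technical heart of the paper's contribution; without it your oracle inequality does not go through as stated.
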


\begin{theorem}
Assume that $\X$ has Assouad dimension $d$. There exist constants $C, \,C'$ independent of $d$ and $\mu(\X)$, such that the following holds.  
\par
Suppose the automatic stopping option is used. Define $$\alpha(n)\doteq\paren{\log^2 n}\log\log(n/\delta)+ \log (1/\delta).$$ With probability at least $1-\delta$, the algorithm attains a 
diameter decrease rate of $k\leq C'd\log d$, and the excess risk of the regressor satisfies
\begin{eqnarray*}
 \norm{f_n - f}^2 \leq C\cdot\paren{\Delta_{\Y}^2 + \lambda^2}(\diamX +1)\cdot\paren{\frac{\alpha(n)}{n}}^{2/(2+k)}.
\end{eqnarray*}
\label{theo:main2}
\end{theorem}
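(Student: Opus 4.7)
The plan is to combine an oracle-type per-partition risk bound with a structural analysis of the automatic-stopping rule. Concretely, I would first show that with probability at least $1-\delta$ every candidate partition $\A^i$ produced by $\mathtt{adaptiveRPtree}$ satisfies
$$\norm{f_{n,\A^i} - f}^2 \;\leq\; C_1\,\lambda^2\,\diam{\A^i} \;+\; C_2\,\Delta_\Y^2\,\frac{\alpha(n)\,|\A^i|}{n},$$
and then use the stopping condition to argue that the returned $\Ar$ approximately minimizes the scale-free trade-off $\frac{\alpha(n)}{n}|\A|+\diam{\A}$ that the algorithm literally optimizes. Passing the result back through the per-partition bound produces the $(\Delta_\Y^2+\lambda^2)$ prefactor, and the rate $(\alpha(n)/n)^{2/(2+k)}$ will fall out after substituting the structural estimates $\diam{\A^i}\leq \diamX/4^i$ (from the target $\diameter{A}/2$ passed to $\mathtt{coreRPtree}$) and $|\A^i|\leq 2^{\lev{\A^i}}\leq 2^{ki}$ (from the diameter-decrease rate $k$).

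\textbf{Per-partition bound.} For a fixed $\A$, I would decompose
$$\norm{f_{n,\A} - f}^2 \;=\; \sum_{A\in\A} \mu(A)\,\expecf{X}\!\bigl[\norm{f_{n,\A}(X)-f(X)}^2 \,\big|\, X\in A\bigr],$$
and on each cell split the integrand into a bias term bounded through the Lipschitz condition and the empirical extent of $A$, and a variance term of order $\Delta_\Y^2/(n\mu_n(A))$ coming from Hoeffding applied to the $\Y$-valued average (with the default $y_0$ absorbing empty cells). Converting $\mu(A)$ into $\mu_n(A)$ by a multiplicative Chernoff bound (the source of the $\log\log(n/\delta)$ factor in $\alpha(n)$) and invoking the weighted-mean definition of $\diam{\A}$ from Section~\ref{sec:diameters} then yields the advertised form, with cells containing few training points being charged to the variance rather than the bias term.

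\textbf{Uniform control and the stopping analysis.} Because the $\A^i$ are data-dependent, the per-partition bound must hold simultaneously for all candidates. Since $\mathtt{coreRPtree}$ caps the overall tree at height $6\log n$ and each internal node is a halfspace in $\real^D$, a Sauer-Shelah argument bounds the number of tree structures realizable on the sample polynomially in $n$, so a single union bound only costs $(\log^2 n)\log\log(n/\delta)+\log(1/\delta)=\alpha(n)$. Let $i^*$ be the first round in which the option-2 condition fires. Its failure at $i^*-1$ gives
$$\lev{\A^{i^*-1}} \;<\; \log\!\paren{\frac{n\,\diam{\A^{i^*-1}}}{\alpha(n)\,\diamX}},$$
equivalently $\alpha(n)|\A^{i^*-1}|/n < \diam{\A^{i^*-1}}/\diamX$; so at this round variance is already dominated by bias. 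Combining with $\diam{\A^{i^*-1}}\leq \diamX/4^{i^*-1}$ and $\lev{\A^{i^*-1}}\leq k(i^*-1)$ pins down $i^*-1$ to be of order $\log(n/\alpha(n))/(k+2)$. Evaluating the trade-off $\alpha(n)|\A|/n+\diam{\A}$ at either $\A^{i^*-1}$ or $\A^{i^*}$ is then at most $C_3(1+\diamX)(\alpha(n)/n)^{2/(2+k)}$. Since $\Ar$ is chosen to minimize this trade-off between the two, plugging back into the per-partition bound gives the advertised excess risk.

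\textbf{Main obstacle.} The crux is the bias half of the per-partition bound. The textbook Lipschitz argument would give $\lambda^2 \Delta^2(A)$ in terms of the \emph{physical} diameter, but RPtree cells are random and irregular, and their physical diameters need not shrink at all. Replacing $\Delta^2(A)$ by the empirical $\diam{A}$ forces one to carefully pay for the $\mu$-mass of the ``overhang'' of $A$ lying outside its sample through the light cells, and then verify that the weighted quantity $\diam{\A^i}$ actually contracts by the factor of four per round promised by $\mathtt{coreRPtree}$. This data-diameter machinery is the novel piece emphasized in the introduction, and it is what allows the rest of the proof to go through in a cell-shape-agnostic manner.
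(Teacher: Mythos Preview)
Your overall architecture matches the paper's closely: a per-partition bias--variance bound (the paper's Lemma~\ref{lem:risk}), combined with a stopping analysis (Lemma~\ref{lem:partition02}) and the diameter-decrease-rate guarantee (Corollary~\ref{cor:diameter-decrease-max}). Your stopping argument, based on the failure of the condition at round $i^*-1$ together with $\diam{\A^{i}}\le\diamX/4^i$ and $\lev{\A^i}\le ki$, is essentially the paper's case analysis.

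The genuine gap is precisely where you flag it, but your sketch of the resolution does not go through. You say you would ``pay for the $\mu$-mass of the overhang of $A$ lying outside its sample through the light cells,'' and earlier propose to convert $\mu(A)$ to $\mu_n(A)$ by a multiplicative Chernoff bound and a Sauer--Shelah count over tree structures. The problem is that the overhang --- the part of a cell $A$ far from $A\cap\Xspl$ --- is itself a \emph{data-dependent} set, and it does not belong to the class of intersections of $O(\log n)$ halfspaces that your Sauer--Shelah argument covers. So you have no uniform control of its true mass, and hence no way to turn the pointwise bias bound $\lambda^2\Delta^2(A)$ into one involving only $\diam{A}$.

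The paper's device is to fix, \emph{before} seeing data, a finite multiresolution collection $\B$ of balls (minimal $2^{-i}\Delta_\X$-covers of $\X$ for $i=0,\ldots,I$ with $I\approx\log n^{2/(2+d)}$), and then replace each nonempty cell $A$ by the pair $A_1'=A\cap B$, $A_2'=A\setminus B$ for an appropriate $B\in\B$ enclosing $A\cap\Xspl$. All candidate $A_1',A_2'$ now lie in a \emph{fixed} class $\C$ of sets of the form (intersection of $O(\log n)$ halfspaces) $\cap$ (ball in $\B$ or its complement), on which relative VC bounds (Lemma~\ref{lem:relativeVC}) give $\mu(A')\lesssim\mu_n(A')+\alpha(n)/n$ uniformly. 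The bias on $A_1'$ is then controlled via the physical diameter of the enclosing ball, which is $O(\Delta_n(A))$ down to the finite resolution $2^{-I}\Delta_\X$. That resolution cutoff is what produces the extra additive $\lambda^2 n^{-4/(2+d)}\diamX$ term in the paper's per-partition bound; your stated bound omits it, and without a mechanism like $\B$ it is not clear how to obtain even the weaker version. The extra term is harmless for the final rate since $n^{-4/(2+d)}\le(\alpha(n)/n)^{2/(2+k)}$ when $k\ge d$, so once you insert the alternate-partition step the rest of your plan goes through essentially as written.
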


\subsection*{Analysis outline}
We start in section \ref{sec:preliminaries} by laying out the necessary tools for the rest of the analysis.
\par
The theorems are then proved in two parts. First we bound the excess risk of the algorithm in terms of the observed diameter decrease rates in section \ref{sec:risk} (lemma \ref{lem:riskCVoption} for the cross -validation option, and lemma  \ref{lem:riskASoption} for the automatic stopping). We subsequently argue that these decrease rates depend just on the intrinsic dimensionality of the data (corollary \ref{cor:diameter-decrease-max} of section \ref{sec:decreaserates}).
\par 
Theorem \ref{theo:main1} results from lemma \ref{lem:riskCVoption} and corollary \ref{cor:diameter-decrease-max}, while theorem \ref{theo:main2} results from lemma \ref{lem:riskASoption} and corollary \ref{cor:diameter-decrease-max}.

\section{Proof preliminaries: risk bound for $f_{n, \A}$}
\label{sec:preliminaries}
In this section we develop the necessary tools to bound the excess risk of $f_{n, \A}$, where $\A$ is an RPtree partition, i.e. $\A$ is defined by the leaves of some subtree of the tree returned by $\mathtt{adaptiveRPtree}$.
\subsection{Generic decomposition of excess risk}
We start the analysis with a standard decomposition of the excess risk into bias and variance terms. Let $\A$ be any partition of $\X$. 
The following function of $x\in \X$ provides a bridge between the regressor $f_{n,\A}$ and the regression function $f$:
\begin{eqnarray*}
 \ft_{n,\A}(x) &\doteq& \displaystyle \Expectation_{\Yspl|\Xspl} f_{n,\A}(x) 
= \frac{\sum_{i= 1}^n f(X_i) \ind{X_i \in \A(x)}}{n\mu_n(\A(x))},  
\end{eqnarray*}
if $\mu_n(\A(x))\neq 0$, otherwise we set $\ft_{n,\A}(x) = y_{0} \in \Y$ .
\par
The pointwise excess risk can be bounded as  
\begin{eqnarray}
 \norm{f_{n,\A}(x) - f(x)}^2 &\leq& 2\norm{f_{n,\A}(x) - \ft_{n,\A}(x)}^2 \nonumber\\
&\,& + 2\norm{\ft_{n,\A}(x) - f(x)}^2.\label{eq:factor}
\end{eqnarray}
We therefore proceed by bounding each term on the r.h.s separately in the following two lemmas.

\begin{lemma}[Variance]
Let $\A$ be a partition of $\X$. The following inequality holds for all $x\in \X \,\text{s.t.} \,\mu_n(\A(x))>0$, with probability at least $1-\delta'$ over the random choice of $\Yspl$ for $\Xspl$ fixed: 
\begin{equation}
 \norm{f_{n,\A}(x) - \ft_{n,\A}(x)}^2 \leq \Delta_{\Y}^2\cdot\frac{2 + \ln(\abs{\A}/\delta')}{n \mu_n(\A(x))}.
\end{equation}
\label{lem:var}
\end{lemma}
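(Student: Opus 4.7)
The plan is to condition on $\Xspl$, so that the partition $\A$ and the cell counts $m_A \doteq n\mu_n(A)$ become deterministic, leaving all randomness in $\Yspl$. For any cell $A$ with $m_A>0$ and any $x\in A$, unfolding the definitions of $f_{n,\A}$ and $\ft_{n,\A}$ gives
\[
 f_{n,\A}(x) - \ft_{n,\A}(x) \,=\, \frac{1}{m_A}\,S_A, \qquad S_A \,\doteq\, \sum_{i:\,X_i\in A}\bigl(Y_i - f(X_i)\bigr).
\]
Write $Z_i \doteq Y_i - f(X_i)$. Conditional on $\Xspl$, the $Z_i$'s are independent and centered (because $f(X_i) = \mathbb{E}[Y_i\mid X_i]$), and each satisfies $\|Z_i\|\leq \Delta_\Y$ since both $Y_i$ and its conditional mean lie in the $\Delta_\Y$-ball containing $\Y$. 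Thus $\|S_A\|$ is the norm of a sum of $m_A$ independent, mean-zero, $\Delta_\Y$-bounded vectors in $\real^{D'}$.

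The next step is to apply a vector-valued Hoeffding-type inequality (Pinelis' extension of Azuma--Hoeffding to Hilbert-space-valued martingales is the natural choice) to $S_A$. For a single cell this yields, with probability at least $1-\delta'/|\A|$ over $\Yspl$,
\[
 \|S_A\|^2 \,\leq\, m_A\,\Delta_\Y^2\,\bigl(2 + \ln(|\A|/\delta')\bigr),
\]
where the additive ``$2$'' absorbs universal constants from the exponential tail (or, equivalently, combines the moment estimate $\mathbb{E}\|S_A\|^2 \leq m_A\Delta_\Y^2$ with a subsequent exponential deviation). Dividing by $m_A^2$ gives the per-cell statement of the lemma, and a union bound over the at most $|\A|$ non-empty cells upgrades the inequality to hold simultaneously for every $x$ with $\mu_n(\A(x))>0$, since the right-hand side depends on $x$ only through the identity of the cell $\A(x)$.

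The one delicate step is the vector-valued concentration itself: a coordinatewise scalar Hoeffding applied to each component of $S_A$ would introduce an extra factor of $D'$ (the ambient dimension of $\Y$), because the uniform bound on each coordinate is still $\Delta_\Y$ rather than $\Delta_\Y/\sqrt{D'}$. Pinelis' inequality provides a tail bound on $\|S_A\|$ itself with no dimension dependence, and once it is in hand the remainder of the argument—reducing to a sum of bounded independent centered vectors, and union-bounding over cells—is essentially bookkeeping.
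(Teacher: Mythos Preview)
Your argument is correct and close to the paper's, but the concentration step is carried out differently. The paper does not invoke Pinelis' inequality; it applies McDiarmid's bounded-differences inequality directly to the \emph{scalar} function $\psi(\Yspl_A)=\norm{f_{n,\A}(x)-\ft_{n,\A}(x)}$, observing that replacing a single $Y_i$ in cell $A$ changes $\psi$ by at most $\Delta_\Y/(n\mu_n(A))$. This gives $\psi\leq \expec\psi+\Delta_\Y\sqrt{\ln(|\A|/\delta')/(2n\mu_n(A))}$, after which $\expec\psi$ is bounded by Jensen and the identity $\expec\norm{\sum_i Z_i}^2=\sum_i\expec\norm{Z_i}^2$ for independent centered $Z_i$---precisely the ``moment estimate plus exponential deviation'' you mention parenthetically. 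Squaring via $(a+b)^2\leq 2a^2+2b^2$ then yields the exact constant $2+\ln(|\A|/\delta')$. Your primary route through Pinelis is equally dimension-free and valid, but would deliver $2\ln(2|\A|/\delta')$ rather than $2+\ln(|\A|/\delta')$, so the claim that the additive ``$2$'' simply absorbs Pinelis' constants is a little loose on the coefficient of the log term. The McDiarmid-on-the-norm trick is both more elementary (no Hilbert-space martingale machinery needed) and recovers the stated constant exactly.
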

\begin{proof}
Fix $\Xspl$. Now fix $A\in \A$, and let $x\in A$. We'll consider 
$\Yspl_{A} \doteq \{Y_i \in \Yspl\, \text{s.t.}\, X_i \in A\}$.
Write:
\begin{eqnarray*}
 \psi(\Yspl_{A}) &\doteq& \norm{f_{n,\A}(x) - \ft_{n,\A}(x)} \\ 
&=& \norm{\frac{\sum_{i= 1}^n (Y_i - f(X_i)) \ind{X_i \in A}}{n\mu_n(A)}}.
\end{eqnarray*}
We can now apply McDiarmid's inequality to $\psi(\cdot)$, as it is easy to verify that, changing one of the $Y$ values in $\Yspl_{A}$ changes the value of 
$\psi(\cdot)$ by at most $\frac{\Delta_{\Y}}{n\mu_n( A)}$. We then have that, 
\begin{equation*}
 \psi(\Yspl_{A}) \leq \expec{\psi(\Yspl_{A})} +  \Delta_{\Y}\cdot\sqrt{\frac{\ln(\abs{\A}/\delta')}{2n\mu_n(A)}}
\end{equation*}
with probability at least $1-\delta'/\abs{\A}$ over the random choice of $\Yspl_{A}$.
\par
The expectation can be bounded as follows
\begin{eqnarray*}
\expec{\psi(\Yspl_{A})} &\leq& \paren{\expec{\paren{\psi(\Yspl_{A})}^2}}^{1/2}\\
&=&  \paren{\expec\norm{\frac{\sum_{i= 1}^n (Y_i - f(X_i)) \ind{X_i \in A}}{n\mu_n( A)}}^2}^{1/2}\\
&\leq& \paren{\frac{\sum_{i= 1}^n \expec\norm{Y_i - f(X_i)}^2 \ind{X_i \in A}}
{\paren{n\mu_n( A)}^2}}^{1/2}\\
&\leq& \paren{\frac{\sum_{i= 1}^n \Delta_{\Y}^2 \ind{X_i \in A}}
{\paren{n\mu_n( A)}^2}}^{1/2} \\
&=& \frac{\Delta_{\Y}}{\sqrt{n\mu_n( A)}}.
\end{eqnarray*}
The first inequality above is an application of Jensen's inequality. The second inequality results from the fact that, for independent random vectors $v_i$ with null expectation, we have $\expec{\norm{\sum_i v_i}^2} = \sum_i \expec{\norm{v_i}^2}$; here we just take $v_i$ to be $(Y_i - f(X_i)) \ind{X_i \in A}/\paren{n\mu_n( A)}$.
\par
Combining the above yields the desired bound on $\psi(\Yspl_{A})$ with probability at least $1-\delta'/\abs{\A}$. We then conclude with a union bound over all $A\in \A$.
\end{proof}

\begin{lemma}[Bias]
Let $\A$ be a partition of $\X$. The following inequality holds for all $x\in \X \,\text{s.t.} \,\mu_n(\A(x))>0 $:
\begin{equation}
 \norm{\ft_{n,\A}(x) - f(x)}^2 \leq \lambda^2\Delta^2\paren{\A(x)}.
\end{equation}
\label{lem:bias}
\end{lemma}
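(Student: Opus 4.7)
The plan is a short direct calculation using the definition of $\ft_{n,\A}$ and the Lipschitz assumption on $f$. Since $\mu_n(\A(x))>0$, the definition gives
\[
\ft_{n,\A}(x) - f(x) \;=\; \frac{\sum_{i=1}^n \bigl(f(X_i)-f(x)\bigr)\,\ind{X_i\in\A(x)}}{n\,\mu_n(\A(x))},
\]
where we have pulled $f(x)$ inside the average using the identity $\sum_i \ind{X_i\in\A(x)} = n\,\mu_n(\A(x))$. This is the only algebraic step needed before invoking the Lipschitz hypothesis.

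Next I would apply the triangle inequality to the norm and use $\norm{f(X_i)-f(x)} \leq \lambda\,\norm{X_i-x}$. For each index $i$ contributing to the sum we have both $X_i\in\A(x)$ and $x\in\A(x)$, so $\norm{X_i-x}\leq \Delta(\A(x))$, the diameter of the cell containing $x$. Pulling this uniform bound out of the sum and cancelling against the normalizer $n\,\mu_n(\A(x))$ gives
\[
\norm{\ft_{n,\A}(x)-f(x)} \;\leq\; \lambda\,\Delta(\A(x)),
\]
and squaring yields the claimed inequality.

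There is essentially no obstacle: the lemma is a pointwise, deterministic statement conditional on the partition, so no concentration or randomness arguments are involved. The only subtlety worth flagging is that this bound uses the \emph{physical} diameter of the cell containing $x$ (which is needed because $x$ itself need not be a training point, so only physical distances inside the cell are controlled by a diameter). The work of converting individual physical-diameter bounds into an average data-diameter bound, which is the paper's main technical innovation, will therefore have to happen later, when $\norm{\ft_{n,\A}(\cdot)-f(\cdot)}^2$ is integrated against $\mu$ and passed through sample-based arguments relating $\mu$-measure of cells to $\mu_n$-measure and physical to data diameters.
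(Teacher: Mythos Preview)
Your proof is correct and matches the paper's own argument essentially line for line: write $\ft_{n,\A}(x)-f(x)$ as the average of $f(X_i)-f(x)$ over training points in the cell, apply the triangle inequality, use the Lipschitz bound, and control $\norm{X_i-x}$ by the physical diameter $\Delta(\A(x))$. Your remark that this necessarily uses the physical diameter (since $x$ need not be a sample point) and that the passage to data diameters happens later via the alternate partitions $\A'$ is also exactly how the paper proceeds.
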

\begin{proof}
Fix $A\in\A$ and let $x\in A$. Now write
 \begin{eqnarray*}
\norm{\ft_{n,\A}(x) - f(x)}^2  
&=& \norm{\frac{\sum_{i= 1}^n (f(X_i) - f(x)) \ind{X_i \in A}}{n\mu_n(A)}}^2\\
&\leq&  \paren{\frac{\sum_{i= 1}^n \norm{f(X_i) - f(x)} \ind{X_i \in A}}{n\mu_n(A)}}^2\\
&\leq& \paren{\frac{\sum_{i= 1}^n \lambda\norm{X_i - x} \ind{X_i \in A}}{n\mu_n(A)}}^2\\
&\leq& \lambda^2\Delta^2\paren{A},
\end{eqnarray*}
where the second inequality results from the Lipschitz condition on $f(\cdot)$.
\end{proof}
\par
In lemma \ref{lem:bias} above, the bias is bounded in terms of the \emph{physical diameters} $\Delta(A)$. However, for an RPtree partition $\A$ (i.e. $\A$ is defined by the leaves of some subtree), the physical diameters $\left\{\Delta(A), A\in\A\right\}$ could be as large as $\Delta_\X$, the diameter of the whole space. As previously discussed, RPtree focuses on decreasing the \emph{data diameters} $\Delta_n(A)$, and we'll argue that this is sufficient to decrease the bias of the estimator. For this purpose, we will replace RPtree partitions $\A$ with alternate partitions $\A'$ as explained in the next section.

\subsection{Alternate partitions}
\begin{figure}
\centering
\hspace{-0.02cm}
 \subfigure[Cover $\B$]{
\includegraphics[height=2.2cm,width = 2.2cm]{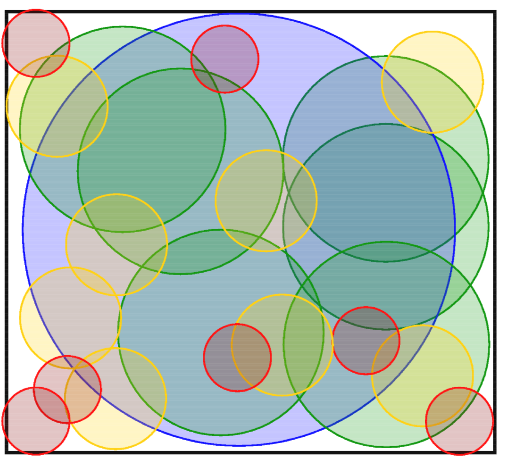}
}
\hspace{0.02cm}
\subfigure[Partition $\A$]{
\includegraphics[height=2.2cm,width = 2.2cm]{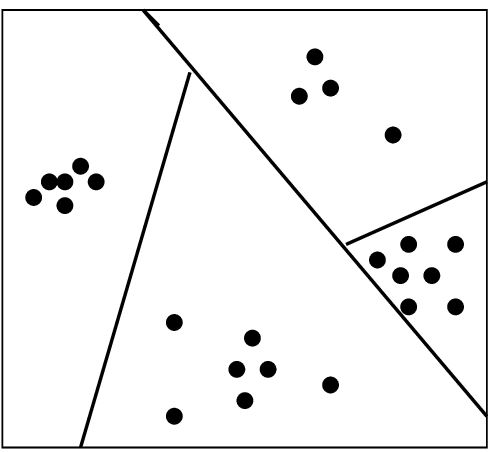}
}
\hspace{0.02cm}
\subfigure[Partition $\A'$]{
\includegraphics[height=2.2cm,width = 2.2cm]{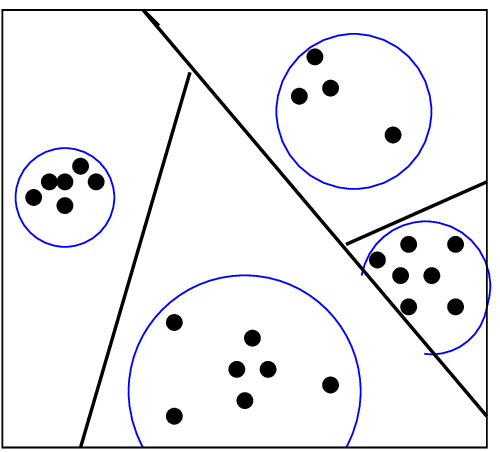}
}
\caption{We start with a cover $\B$ of $\X$ with balls of different size, next we see the data and obtain a partition $\A$, we then substitute $\A$ with $\A'$ by intersecting the cells of $\A$ with balls of $\B$.}
\label{fig:altpart}
\end{figure}

Given a partition $\A$ built by RPtree, we will consider an alternate partition $\A'$ which will serve to analyze the bias of the regressor $f_{n,\A}$ (see above discussion of lemma \ref{lem:bias}). Each cell of $\A'$ will either contain no data point, or has physical diameter roughly the same as its data diameter. This is done by intersecting the cells of $\A$ with balls or complements of balls from a fixed collection $\B$ defined below (see figure \ref{fig:altpart}). We'll see that $\A'$ approximately maintains key properties of $\A$, namely partition size and average data diameters.

\begin{definition}
We define $\B$ as the following collection of balls in $\real^D$. Let $I=\lfloor \log n^{2/(2+d)}  \rfloor$. For each $i=0$ to $ I$, consider a minimal $\paren{2^{-i}\Delta_\X}$-cover of $\X$; let $\B_i$ be the set of all balls $B\paren{z, 2^{-(i-2)}\Delta_\X}$ centered at points $z$ in the cover. We set 
$\B\doteq \cup_{i=0}^I\B_i$. 
\end{definition}
Every cell $A\in \A$ such that $A\cap\Xspl\neq \emptyset$
will be replaced in $\A'$ by two cells $A_1', A_2'$ obtained as follows.
\par 
Consider the smallest $i\in\{0, \ldots, I\}$ such that 
$2^{-i}\Delta_{\X} \leq \max \left\{\diameter{A}, 2^{-I}\Delta_{\X}\right\}$, i.e. $i = \min\left\{I, \lceil\log\frac{\Delta_\X}{\Delta_n(A)}\rceil\right\}$. There exists a ball $B\in \B_i$ which covers $A\cap\Xspl$: pick any $x\in A\cap \Xspl$, and pick the ball $B$ in $\B_i$ whose center $z$ is closest to $x$; we have $\forall x'\in A\cap\Xspl$, that $x'\in B = B\paren{z, 2^{-(i-2)}\Delta_\X}$ since by a triangle inequality
\begin{eqnarray*}
 \norm{z-x'}&\leq& \norm{z-x} + \norm{x-x'}\leq2^{-i}\Delta_\X + \diameter{A}\\
&\leq& 2^{-i}\Delta_\X + 2^{-(i-1)}\Delta_\X 
\leq 2^{-(i-2)}\Delta_\X.
\end{eqnarray*}

We define $A_1' = B\cap A$ and $A_2' = A\setminus A_1'$ for all $A\in \A, \,A\cap\Xspl\neq \emptyset$; on the other hand we let $A_1' = A$, $A_2'= \emptyset$ for all $A\in \A, \,A\cap\Xspl = \emptyset$. We finally define $\A'$ to be the collection of all such $A_1', A_2'$ over $A\in\A$.

In the following lemma we relate diameters of cells of $\A'$ to the data diameters of cells of $\A$.

\begin{lemma}[Diameters of $\A'$]
Let $\A$ be some partition of $\X$ and let $\A'$ as defined above. We have that
\begin{equation*}
\sum_{A'\in \A'}\mu_n(A')\Delta^2(A')\leq 64\diam{\A} + 256n^{-4/(2+d)}\cdot\diamX.
\end{equation*}
\label{lem:properties_A'}
\end{lemma}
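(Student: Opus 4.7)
The plan is to bound $\Delta^2(A')$ for each cell of the alternate partition in terms of either the data diameter of its parent cell $A \in \A$ or the floor-induced scale $2^{-I}\Delta_\X$, and then sum.

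First I would dispose of the trivial cells. For any $A\in\A$ with $A\cap\Xspl=\emptyset$ we have $\mu_n(A_1')=\mu_n(A)=0$ and $A_2'=\emptyset$, so these contribute nothing to the sum. For any $A\in\A$ with $A\cap\Xspl\ne\emptyset$ we have $A_2'\cap\Xspl=\emptyset$ (all data points of $A$ fell inside $B$), so $\mu_n(A_2')=0$ and $\mu_n(A_1')=\mu_n(A)$. Hence
\begin{equation*}
\sum_{A'\in\A'}\mu_n(A')\Delta^2(A')=\sum_{A\in\A,\,A\cap\Xspl\neq\emptyset}\mu_n(A)\,\Delta^2(A_1').
\end{equation*}

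Next I would extract the geometric bound on $\Delta(A_1')$. By construction $A_1'\subseteq B(z,2^{-(i-2)}\Delta_\X)$ for the chosen index $i=\min\{I,\lceil\log(\Delta_\X/\Delta_n(A))\rceil\}$, so $\Delta(A_1')\le 2\cdot 2^{-(i-2)}\Delta_\X=2^{-(i-3)}\Delta_\X$. Writing $M\doteq\max\{\Delta_n(A),2^{-I}\Delta_\X\}$, the index $i$ is precisely the smallest integer with $2^{-i}\Delta_\X\le M$, so $2^{-i}\Delta_\X\le M$ and therefore $\Delta(A_1')\le 8M\le 8\Delta_n(A)+8\cdot 2^{-I}\Delta_\X$. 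Squaring and using $(a+b)^2\le 2a^2+2b^2$ would cost an extra factor of $2$; instead I would just observe $\Delta^2(A_1')\le 64\max\{\Delta_n^2(A),2^{-2I}\Delta_\X^2\}\le 64\Delta_n^2(A)+64\cdot 2^{-2I}\Delta_\X^2$. Because $I=\lfloor\log n^{2/(2+d)}\rfloor$ satisfies $2^{-I}<2\,n^{-2/(2+d)}$, this gives
\begin{equation*}
\Delta^2(A_1')\le 64\,\Delta_n^2(A)+256\,n^{-4/(2+d)}\Delta_\X^2.
\end{equation*}

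Finally I would sum this pointwise bound weighted by $\mu_n(A)$ over $A\in\A$, using $\sum_{A\in\A}\mu_n(A)=1$ and the definition $\diam{\A}=\sum_{A\in\A}\mu_n(A)\Delta_n^2(A)$ (which follows from the average-diameter definition since $\sum_A\mu_n(A)=1$), to obtain
\begin{equation*}
\sum_{A'\in\A'}\mu_n(A')\Delta^2(A')\le 64\diam{\A}+256\,n^{-4/(2+d)}\diamX,
\end{equation*}
as required.

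The main (minor) obstacle is the case split in the definition of $i$: one must verify that picking $i=\min\{I,\lceil\log(\Delta_\X/\Delta_n(A))\rceil\}$ always satisfies $2^{-i}\Delta_\X\le M$, and that squaring the $\max$ rather than the sum is what produces the exact constants $64$ and $256$ in the statement. Everything else is straightforward bookkeeping.
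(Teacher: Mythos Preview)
Your proof is correct and follows essentially the same line as the paper's. The only cosmetic difference is that the paper splits the cells into $\A_+=\{A:\Delta_n(A)>2^{-I}\Delta_\X\}$ and its complement and applies one of the two diameter bounds per group, whereas you bound $\Delta^2(A_1')\le 64\max\{\Delta_n^2(A),2^{-2I}\diamX\}$ uniformly and then replace the $\max$ by the sum; both routes land on the same constants $64$ and $256$.
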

\begin{proof}
Let $A\in \A, A\cap\Xspl\neq \emptyset$. We have $\mu_n(A_1') = \mu_n(A)$ and $\mu_n(A_2') = 0$. Also, given the smallest $i\in\{0, \ldots, I\}$ such that 
$2^{-i}\Delta_{\X} \leq \max \left\{\diameter{A}, 2^{-I}\Delta_{\X}\right\}$, we have that
\begin{itemize}
 \item $\diameter{A} > 2^{-I}\Delta_{\X}$ implies
 $\Delta(A_1')\leq 2\cdot2^{-(i-2)}\Delta_{\X}\leq 8\diameter{A},$
 \item $\diameter{A}\leq 2^{-I}\Delta_{\X}$ implies 
 $\Delta(A_1')\leq 2\cdot2^{-(I-2)}\Delta_{\X}\leq 16n^{-2/(2+d)}\cdot\Delta_{\X}$.
\end{itemize}
Therefore, let $\A_+ = \{A\in \A, \diameter{A} > 2^{-I}\Delta_{\X}\}$, we have 
\begin{eqnarray*}
\sum_{A'\in \A'}\mu_n(A')\Delta^2(A')
&=&  \sum_{A\in \A_+}\mu_n(A)\Delta^2(A_1') \\
&+&\sum_{A\in \A\setminus \A_+}\mu_n(A)\Delta^2(A_1')\\
&\leq& \sum_{A\in \A_+}64\mu_n(A)\diam{A} \\
&+& \sum_{A\in \A\setminus \A_+}256\mu_n(A)n^{-\frac{4}{2+d}}\cdot\diamX\\
&\leq& 64\diam{\A} + 256n^{-\frac{4}{2+d}}\cdot\diamX.
\end{eqnarray*}
\end{proof}

In order to bound the integrated excess risk, we'll need the empirical mass of cells of $\A'$ to be close to their true mass. In particular, this will allow us to effectively discard cells that are empty of data since they will have little effect on the integrated excess risk. The following lemma from VC theory will come in handy.

\begin{lemma}[Relative VC bounds -\cite{VC:72}]
Let $\C$ be a class of subsets of $\real^D$, and let its $2n$-shatter coefficient be given by $\Sh{\C}{2n}$.
With probability at least $1-\delta'$ over the choice of $\Xspl$, all $A'\in \C$ satisfy  
\begin{eqnarray}
\mu(A') &\leq& \mu_n(A') + 2\sqrt{\mu_n(A')\frac{\ln \Sh{\C}{2n}  + 
\ln ({4}/{\delta'})}{n}} \nonumber \\
&+& 4\frac{\ln \Sh{\C}{2n} + \ln ({4}/{\delta'})}{n}. \label{eq:vc} 
\end{eqnarray}
\label{lem:relativeVC}
\end{lemma}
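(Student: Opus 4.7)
The statement is the classical Vapnik–Chervonenkis relative deviation inequality, so the proof plan follows the well-established two-step route: symmetrization plus a union bound over the projection of $\C$ onto a finite sample.

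First I would reduce the one-sample statement to a two-sample one by the standard ghost-sample symmetrization. Let $\Xspl' = \{X_i'\}_{i=1}^n$ be an independent copy of $\Xspl$, let $\mu_n'$ be the empirical measure on $\Xspl'$, and pick the threshold $t$ so the target deviation on the right-hand side is roughly $2\sqrt{\mu_n(A') \cdot u/n} + 4u/n$ where $u = \ln \Sh{\C}{2n} + \ln(4/\delta')$. The standard argument shows that whenever $\mu(A')$ exceeds $\mu_n(A')$ by $t$, with constant probability $\mu_n'(A')$ also exceeds $\mu_n(A')$ by roughly $t/2$ (this is the Chebyshev-type lower bound that requires $n$ to be at least of order $1/\mu(A')$; for smaller $\mu(A')$ the claimed bound is trivially implied by the additive $4u/n$ term). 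Consequently it suffices to bound, uniformly over $\C$, the probability that $\mu_n'(A') - \mu_n(A')$ is large in terms of $\mu_n(A') + \mu_n'(A')$.

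Once both samples are fixed, for each $A' \in \C$ only the intersection $A' \cap (\Xspl \cup \Xspl')$ matters, so there are at most $\Sh{\C}{2n}$ distinct sets to consider. Conditionally on $\Xspl \cup \Xspl'$, I would introduce a random permutation (Rademacher swaps) of the $2n$ points between the two halves, turning $n\paren{\mu_n'(A') - \mu_n(A')}$ into a sum of $\pm 1$ random variables of known total squared magnitude $n\paren{\mu_n(A') + \mu_n'(A')}$. A Bernstein/Hoeffding bound on this permutation statistic, followed by a union bound over the $\Sh{\C}{2n}$ distinct restrictions of $\C$, yields a deviation of order $\sqrt{(\mu_n(A') + \mu_n'(A')) \cdot u/n}$ with probability $1 - \delta'/2$.

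The final step is algebraic: undo the symmetrization to return to a one-sample statement, and then solve the resulting quadratic inequality in $\sqrt{\mu(A')}$ to isolate $\mu(A')$ on the left. Solving it and using $\sqrt{a+b} \leq \sqrt{a} + \sqrt{b}$ reproduces exactly the stated $2\sqrt{\mu_n(A')\,u/n} + 4\,u/n$ upper bound. The only technically delicate step is the symmetrization in the relative (Bernstein) regime — the Chebyshev lower bound on $\mu_n'(A')$ fails for very rare sets, and the fix is precisely the extra additive $4u/n$ term in the conclusion, which absorbs this corner case. Everything else is a clean union bound and a quadratic inversion, and the result can be cited verbatim from \cite{VC:72}.
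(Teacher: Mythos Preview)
The paper does not prove this lemma at all: it is stated as a known result from \cite{VC:72} and used as a black box in the subsequent analysis. Your sketch is the standard symmetrization-plus-union-bound argument underlying relative VC inequalities and is a correct outline, but for the purposes of this paper nothing beyond the citation is needed (as you yourself note in your final sentence).
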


The next lemma establishes the convergence of empirical masses of cells of $\A'$.

\begin{lemma}[Mass of cells of $\A'$]
With probability at least $1-\delta'$ over $\Xspl$ and the randomness in the algorithm, we have for all RPtree partitions $\A$, for all $A'\in\A'$ that
\begin{eqnarray*}
\mu(A') &\leq& \mu_n(A') + 2\sqrt{\mu_n(A')\frac{\V + 
\ln({4}/{\delta'})}{n}} \\
&+& 4\frac{\V + \ln ({4}/{\delta'})}{n}, \text{ where }\label{eq:vc2}\\
\V &\leq& O(\log n)(\log n + \log\!\log(1/\delta)).
\end{eqnarray*}

\label{lem:Shatter} 
\end{lemma}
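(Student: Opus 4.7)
The plan is to apply the relative VC bound of Lemma \ref{lem:relativeVC} to a single class $\C$ of subsets of $\real^D$ rich enough to contain every alternate cell $A'$ produced by any run of $\mathtt{adaptiveRPtree}$. The obstacle is that each RPtree cell is an intersection of up to $6\log n$ halfspaces of $\real^D$ (recall that $\mathtt{coreRPtree}$ caps the tree height at $6\log n$), so a naive bound on the shatter coefficient of such intersections would scale like $n^{\Omega(D\log n)}$ and be useless in high ambient dimension.

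The way around this is to condition on the algorithm's internal randomness and then invoke Fubini. Let $u=\{u_v\}_v$ denote the random projection directions drawn at the internal nodes of the potential full tree, and for fixed $u$ define $\C(u)$ as the class of all sets of the form $B\cap A$ or $A\setminus B$, where $B$ ranges over the fixed cover $\B$ and $A$ ranges over cells of any subtree built with directions $u$ and arbitrary thresholds. The key observation is that, once $u$ is fixed, each halfspace split is parameterized only by a one-dimensional threshold along a known direction, and on any set of $2n$ points such a family contributes at most $\log(2n+1)$ to the log-shatter coefficient. Each cell restricted to $2n$ points is identified by a root-to-leaf path of length at most $6\log n$ (there are at most $2^{6\log n}$ such paths) together with one of at most $2n+1$ effective thresholds at each split; multiplying by $2|\B|$ for the choice of $B$ and for intersection vs.\ difference yields
\[
\log \Sh{\C(u)}{2n} \;\leq\; \log(2|\B|) + O(\log^2 n),
\]
uniformly in $u$ and with no dependence on $D$.

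Given this shatter-coefficient bound, Lemma \ref{lem:relativeVC} applied to $\C(u)$ (for each fixed $u$) says that with probability at least $1-\delta'$ over $\Xspl$, the claimed inequality holds for every $A'\in \C(u)$. Since this conditional probability is uniform in $u$ and the random directions are independent of $\Xspl$, Fubini yields the same guarantee over the joint distribution of the sample and the algorithm's randomness. It then remains to bound $\log|\B|$: using $I\leq\log n$ together with a standard covering-number estimate for each $\B_i$ and the per-level union bound used internally by $\mathtt{coreRPtree}$, one obtains $\log|\B| = O(\log n\cdot \log\log(n/\delta))$, which combined with the $O(\log^2 n)$ term above yields the advertised $\V\leq O(\log n)(\log n + \log\log(1/\delta))$.

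The main technical hurdle is really the conditioning argument: without it, the VC dimension of intersections of $6\log n$ arbitrary halfspaces of $\real^D$ would pollute the bound with a factor of $D$, exactly the dependency the paper works to avoid. Once the class is narrowed to one-dimensional thresholds along pre-drawn directions, the remaining steps (counting cells, Fubini to decouple $u$ from $\Xspl$, and a covering-number bound for $|\B|$) are routine VC bookkeeping.
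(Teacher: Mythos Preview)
Your high-level strategy---condition on the random directions so that each split is parameterized by a one-dimensional threshold, then apply the relative VC bound and Fubini---is exactly the paper's approach (the paper phrases it as ``suppose w.l.o.g.\ the directions are drawn from a fixed collection $\Prj$''). But your bookkeeping has two related errors that happen to cancel.

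First, your shatter count $2^{6\log n}\cdot(2n+1)^{6\log n}$ implicitly assumes that fixing $u=\{u_v\}_v$ by node position pins down the direction at every split. It does not: $\mathtt{coreRPtree}$ runs $\log(6n^2/\delta)$ independent $\mathtt{basicRPtree}$ trials and keeps the shortest, and which trial wins depends on $\Xspl$. So after conditioning on all randomness, a cell at depth $\ell\leq 6\log n$ is an intersection of $\ell$ halfspaces whose directions come from the full pool $\Prj$ of size $\leq 2n^6\log(6n^2/\delta)$, but which $\ell$ of them are used is data-dependent. The correct count therefore carries a factor of $|\Prj|^{6\log n}$ (the paper writes $((4n+2)|\Prj|+1)^{6\log n}$), and this is precisely where the $\log\log(1/\delta)$ term originates.

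Second, your claim that $\log|\B|=O(\log n\cdot\log\log(n/\delta))$ via ``the per-level union bound used internally by $\mathtt{coreRPtree}$'' is wrong: $\B$ is a fixed deterministic cover of $\X$ with no dependence on $\delta$ or on the algorithm. The paper bounds $|\B|\leq\sum_{i=0}^I 2^{di}\leq 2n^{2d/(2+d)}$ directly from the Assouad dimension, giving $\log|\B|=O(\log n)$. You have moved the $\log\log(1/\delta)$ from where it belongs (the direction pool) to where it does not (the ball cover); the final order of $\V$ comes out right only by accident.
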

\begin{proof}
Suppose w.l.o.g that the RPtree is built by picking random directions from a fixed collection $\Prj$ without replacement. How big should $\Prj$ be so we have enough directions to choose from? The implementation of $\mathtt{coreRPtree}$ ensures that $\abs{\Prj}\leq 2n^6\log\paren{6n^2/\delta}$ is sufficient (see remark \ref{rem:treeSize} of section \ref{sec:coreRPtree}).
Now fix such a collection $\Prj$ and let $\Half_\Prj$ be the union of $\{\X\}$ and the class of half spaces of $\real^D$ defined by hyperplanes normal to the directions in $\Prj$.
For an RPtree partition $\A$, each cell of $\A$ is the intersection of at most $6\log n$ elements of $\Half_\Prj$ since the tree is guaranteed to have height at most $6\log n$ (remark \ref{rem:treeSize}). Each cell of $\A'$ is the intersection of a ball or the complement of a ball in $\B$ with a cell of $\A$.
\par
All such cells therefore belong to the following class of subsets of $\real^D$:
$$\C = \left\{h: h = h_0\cap\paren{\displaystyle\bigcap_{l=1}^{6\log n} h_l}, \, h_0 \text{ or } h_0^\mathsf{C} \text{ is in }\B, h_l \in \Half_\Prj\right\}.$$

We now proceed to bounding $\Sh{\C}{2n}$, the $2n$-shatter coefficient of $\C$ as follows.
\par 
Given $2n$ sample points, every direction $v\in \Prj$ defines at most $2(2n+1)$ equivalent choices of half-spaces in $\real^D$. We therefore have 
\begin{eqnarray*}
\Sh{\C}{2n} &\leq& 2\abs{\B}\paren{(4n+2)\abs{\Prj} +1}^{6\log n} \\
&\leq& 2\abs{\B}\paren{n^6(8n+4)\log \paren{6n^2/\delta} +1}^{6\log n}.
\end{eqnarray*}
Since $\X$ has Assouad dimension $d$, we have $\abs{\B} \leq \sum_{i=0}^I 2^{di}\leq 2n^{2d/(2+d)}$.
The proof is completed by letting $\V = \log \Sh{\C}{2n}$ for $\Prj$ fixed, and calling on lemma \ref{lem:relativeVC}.
\end{proof}

\begin{lemma}[Excess risk]
There exists a constant $C_1$ independent of $d$ and $\mu(\X)$ such that the following holds with probability at least $1-\delta/3$ over the choice of $(\Xspl, \Yspl)$ and the randomness in the algorithm.
\par
Define $\alpha(n) \doteq \paren{\log^2n}\log\!\log (1/\delta) + \log (1/\delta)$. Let $\A^i$ be the final partition reached by $\mathtt{adaptiveRPtree}$. For all partitions $\A \in \left\{\A^j\right\}_{j = 0}^i$, we have
\begin{eqnarray*}
 \norm{f_{n,\A} - f}^2 &\leq& C_1\left(\Delta_{\Y}^2\abs{\A}
\frac{\alpha(n)}{n}\right.\\
&\,& \left. + \lambda^2\paren{\diam{\A} + n^{-4/(2+d)}\diamX}\vphantom{\frac{\alpha(n)}{n}}\right).
\end{eqnarray*}
\label{lem:risk}
\end{lemma}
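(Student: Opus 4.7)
My plan is to apply the pointwise bias--variance decomposition (equation \ref{eq:factor}), integrate over $\mu$, and bound each resulting term separately. The variance term will use lemma \ref{lem:var} combined with lemma \ref{lem:Shatter}, while the bias term requires passing from the RPtree partition $\A$ to the alternate partition $\A'$ so that lemmas \ref{lem:bias} and \ref{lem:properties_A'} can be chained together. A final union bound handles all partitions in the sequence $\{\A^j\}_{j=0}^{i}$ and the random splits of the tree.

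For the variance term I would fix $\A$ and observe that on cells with $\mu_n(A)=0$ both $f_{n,\A}$ and $\ft_{n,\A}$ equal $y_0$, contributing nothing. On cells with $\mu_n(A)>0$, lemma \ref{lem:var} gives the pointwise bound $\Delta_\Y^2(2+\ln(\abs{\A}/\delta'))/(n\mu_n(A))$; integrating against $\mu$ yields a sum whose generic term is $\paren{\mu(A)/\mu_n(A)}\cdot \Delta_\Y^2(2+\ln(\abs{\A}/\delta'))/n$. I would then invoke lemma \ref{lem:Shatter} to bound $\mu(A) \leq 2\mu_n(A) + O(\V/n)$ (applying $2\sqrt{ab}\le a+b$ to the cross term) and use $n\mu_n(A)\ge 1$ on non-empty cells to conclude that the variance integrates to at most $O(\abs{\A}\alpha(n)/n)\cdot\Delta_\Y^2$ once $\ln(\abs{\A}/\delta')$ and $\V$ are absorbed into $\alpha(n)$.

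For the bias term I would switch to the alternate partition $\A'$. Since $\ft_{n,\A}$ is constant on each cell of $\A$, its value is the same on the sub-cells $A_1'$ and $A_2'$ of any $A\in\A$ with $A\cap\Xspl\ne\emptyset$. Because all training data of $A$ lies in $A_1'$, the argument of lemma \ref{lem:bias} sharpens to $\norm{\ft_{n,\A}(x)-f(x)}^2\le \lambda^2\Delta^2(A_1')$ for every $x\in A_1'$; on $A_2'$ and on entirely empty cells of $\A$ I would use the trivial bound $\Delta_\Y^2$. Integrating against $\mu$, the main term $\sum_A \mu(A_1')\lambda^2\Delta^2(A_1')$ becomes, after applying lemma \ref{lem:Shatter} to replace $\mu$ by $\mu_n$ plus a residue, a constant multiple of $\sum_A \mu_n(A_1')\Delta^2(A_1')$, which by lemma \ref{lem:properties_A'} is at most $64\diam{\A}+256n^{-4/(2+d)}\diamX$. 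The data-free pieces contribute at most $O(\abs{\A}\V/n)\cdot\Delta_\Y^2$, which merges into the variance-type term.

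The final step is a union bound over $\{\A^j\}_{j=0}^{i}$: since $\mathtt{adaptiveRPtree}$ stops by the time the tree reaches depth $\log n^2$, there are only $O(\log n)$ such partitions, so rescaling $\delta'$ accordingly only perturbs the $\log(1/\delta')$ hidden inside $\alpha(n)$. The main obstacle I anticipate is controlling the residual term $O(\V/n)\sum_A \Delta^2(A_1')$ produced when lemma \ref{lem:Shatter} is applied to the bias sum: via the crude bound $\sum_A \Delta^2(A_1')\le 2\abs{\A}\diamX$, this spurious contribution must be shown to be dominated either by $\Delta_\Y^2\abs{\A}\alpha(n)/n$ or by the $\lambda^2 n^{-4/(2+d)}\diamX$ residue already present in the lemma's statement, so that no extra term appears in the final bound.
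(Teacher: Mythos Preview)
Your overall strategy---bias--variance decomposition, passage to the alternate partition $\A'$ for the bias, and VC control via lemma~\ref{lem:Shatter}---matches the paper. The gap is precisely the obstacle you flag at the end, and it is \emph{not} resolvable along the lines you sketch. The residual from applying lemma~\ref{lem:Shatter} to the bias sum is of order $\lambda^2\diamX\cdot\abs{\A}\cdot\V/n$: it carries the factor $\lambda^2\diamX$ (so it cannot be absorbed into the $\Delta_\Y^2\abs{\A}\alpha(n)/n$ term, since $\lambda\Delta_\X$ and $\Delta_\Y$ are unrelated), and it carries the factor $\abs{\A}$ (so it cannot be absorbed into the $\lambda^2 n^{-4/(2+d)}\diamX$ term, since $\abs{\A}$ may be as large as $n^2$ for some $\A^j$ in the sequence). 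The same mechanism also inflates your variance bound: using $\mu(A)\le 2\mu_n(A)+O(\V/n)$ together with $n\mu_n(A)\ge 1$ gives $\mu(A)/\mu_n(A)\le O(\V)$ rather than $O(1)$, so you end up with $\V\cdot\ln(\abs{\A}/\delta')$ multiplicatively instead of $\V+\ln(\abs{\A}/\delta')$ additively, and the specific $\alpha(n)$ in the statement is not recovered.

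The paper's remedy is to decompose the integral over $\A'$ from the outset and split the cells into a ``heavy'' set $\A_>'=\{A':\mu_n(A')\ge(\V+\ln(4/\delta'))/n\}$ and a ``light'' set $\A_<'=\A'\setminus\A_>'$. On heavy cells the relative VC bound of lemma~\ref{lem:Shatter} collapses to the clean multiplicative estimate $\mu(A')\le 7\mu_n(A')$, so both the bias and variance integrals convert from $\mu$ to $\mu_n$ with \emph{no additive residue}; the bias term then reduces directly to $\sum_{A'}\mu_n(A')\Delta^2(A')$, and lemma~\ref{lem:properties_A'} applies. On light cells one bypasses the bias--variance split entirely, using the crude pointwise bound $\norm{f_{n,\A}(x)-f(x)}^2\le\Delta_\Y^2$ together with $\mu(A')\le 7(\V+\ln(4/\delta'))/n$; this contributes only $O(\Delta_\Y^2\abs{\A'}(\V+\ln(4/\delta'))/n)$, which is already of the right form. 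This heavy/light dichotomy is the missing idea in your outline.
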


\begin{proof}
Let the partition $\A \in \left\{\A^j\right\}_{j = 0}^i$ and the sample $\Xspl$ be fixed.
By lemma \ref{lem:Shatter} we have, with probability at least $1 - \delta'$, that equation (\ref{eq:vc2}) holds for all $A'\in \A'$ with 
$\V\leq O(\log n)(\log n + \log\!\log(1/\delta))$. 
\par
The excess risk decomposes over $\A'$ as 
\begin{equation*}
 \norm{f_{n, \A} - f}^2 = \sum_{A'\in \A'}\int_{A'} \norm{f_{n, \A}(x) - f(x)}^2 \mu(dx).
\end{equation*}
We next divide the cells of $\A'$ into two groups:
\begin{equation*}
\A_>' \doteq \left\{A' \in \A', \mu_n(A')\geq \frac{\V + \ln (4/\delta')}{n}\right\}, 
\end{equation*}
and $\A_<' \doteq \A' \setminus \A_>'$.
\par 
It's easy to see that from equation (\ref{eq:vc}), we have $\forall A'\in \A_>', \,\mu(A') \leq 7\mu_n(A')$, and $\forall A'\in \A_<', \,\mu(A') \leq 7\frac{\V + \ln (4/\delta')}{n}$.
\par 
Integrating over $\A'_<$, we have
\begin{eqnarray}
&\displaystyle\sum_{A'\in \A_<'}&\int_{A'} \norm{f_{n, \A}(x) - f(x)}^2 \mu(dx) \nonumber\\
&\leq& \sum_{A'\in \A_<'} \Delta_{\Y}^2 \cdot\mu(A')  \nonumber \\
&\leq& \sum_{A'\in \A_<'} \Delta_{\Y}^2\cdot7 \frac{\V + \ln(4/\delta')}{n} \nonumber \\
&\leq&  7\Delta_{\Y}^2\cdot\abs{\A'}\cdot\frac{\V + \ln(4/\delta')}{n}. \label{eq:bound1}
\end{eqnarray}
\par
For the integration over $\A_>'$, we first apply (\ref{eq:factor}), and recall lemmas \ref{lem:bias} and \ref{lem:var} to have that with probability at least $1-\delta'$ over $\Yspl$,
\begin{eqnarray}
&\displaystyle\sum_{A'\in \A_>'}&\int_{A'} \norm{f_{n, \A}(x) - f(x)}^2 \mu(dx)\nonumber\\
&=& \sum_{A'\in \A_>'}\int_{A'} \norm{f_{n, \A'}(x) - f(x)}^2 \mu(dx) \nonumber\\
&\leq& \sum_{A'\in \A_>'}2\lambda^2\Delta^2\paren{A'}\cdot \mu(A') \nonumber \\
&\,& + \sum_{A'\in \A_>'}2\Delta_{\Y}^2\cdot\frac{2 + \ln(\abs{\A'}/\delta')}{n \mu_n(A')}\cdot\mu(A')\nonumber\\
&\leq&  \sum_{A'\in \A_>'}2\lambda^2\Delta^2\paren{A'}\cdot 7\mu_n(A')\nonumber \\
&\,& +\sum_{A'\in \A_>'}2\Delta_{\Y}^2\cdot\frac{2 + \ln(\abs{\A'}/\delta')}{n \mu_n(A')}
\cdot7\mu_n(A')\nonumber\\
&\leq& 14\lambda^2\sum_{A'\in \A_>'}\mu_n(A')\Delta^2\paren{A'} \nonumber \\
&\,& + 14\Delta_{\Y}^2\abs{\A'}\cdot\frac{2 + \ln(\abs{\A'} /\delta')}{n}.
\label{eq:bound2}
\end{eqnarray}
\par
Note that the term $\ln \abs{\A'}$ in (\ref{eq:bound2}) is at most $O(\ln n)$ since the entire tree has height at most $6\log n$. Combining the bounds in (\ref{eq:bound1}) and (\ref{eq:bound2}), we get that there exists a constant $C_0$ such that $\norm{f_{n, \A} - f}^2$ is at most
\begin{eqnarray*}
&\,&C_0\left(\vphantom{\sum_{A'\in \A'}\mu_n(A')\Delta^2}\Delta_{\Y}^2\cdot\abs{\A}\frac{\log^2 n\log\!\log 1/\delta + \log (1/\delta')}{n}\right.\\
&\,&\left. +\lambda^2\sum_{A'\in \A'}\mu_n(A')\Delta^2\paren{A'}\right),
\end{eqnarray*}
with probability at least $1-2\delta'$. 
\par 
Setting $\delta' = \delta/36\log n$, the lemma follows by a union bound over at most $6\log n$ partitions in $\left\{\A^j\right\}_{j = 0}^i$, and then calling on lemma \ref{lem:properties_A'}.
\end{proof}

\section{Risk of final regressor $f_n \doteq f_{n,\Ar}$}
\label{sec:risk}
In this section we bound the excess risk of the final regressor $f_n \doteq f_{n,\Ar}$ in terms of the diameter decrease rate attained when $\mathtt{adaptiveRPtree}$ stops.
\par
To see that the stopping criteria eventually hold, note that the implementation of $\mathtt{coreRPtree}$ ensures that all cells at some level down the hierarchy have a single data point in them (see remark \ref{rem:treeSize}). In other words, we have $\diameter{\A^i} = 0$ eventually, forcing either stopping criterion to hold. 
\par 
We now outline the arguments in this section. For simplicity, assume $\Delta_\X$, $\Delta_\Y$, and $\lambda$ are all $1$. Consider some RPtree partition $\A$ and let $\diameter{\A} \approx \zeta$ for some scalar $\zeta$, we then have $\abs{\A}\lesssim \zeta^{-k}$ where $k$ is the diameter decrease rate attained by the algorithm. From lemma \ref{lem:risk} above, we roughly have 
$\norm{f_{n,\A} - f}^2 \lesssim \zeta^{-k}/n + \zeta^2,$ and the best bound is obtained  by setting $\zeta \approx n^{-1/(2+k)}$. Provided we pick an appropriate partition which optimizes $\zeta$, the final bound would then take the form $\norm{f_{n,\Ar} - f}^2 \lesssim n^{-2/(2+k)}$.

\subsection{Risk bound for cross-validation option}
\begin{lemma}[Existence of a good pruning]
Suppose the cross-validation option is used, and $\mathtt{adaptiveRPtree}$ attains a diameter decrease rate of $k$ on $\Xspl$. Define $$\alpha(n)\doteq\paren{\log^2 n}\log\log(n/\delta)+ \log (1/\delta),$$ and $\zeta \doteq \paren{\frac{\Delta_{\Y}^2\cdot\alpha(n)}{\lambda^2\diamX\cdot n}}^{1/(2+k)}$.
Let $n\geq \max \left\{\paren{\frac{\lambda\Delta_\X}{\Delta_\Y}}^2,
\alpha(n)\right\}$,
and for $i\geq 0$, let $\A^i$ as defined in $\mathtt{adaptiveRPtree}$. Then there exists $i_0\geq 0$ such that $\diameter{\A^{i_0}}\leq 2\zeta\cdot\diameter{\X}$ and $\abs{\A^{i_0}}\leq \zeta^{-k}$.
\label{lem:partition01}
\end{lemma}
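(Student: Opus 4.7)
The plan is to take $i_0$ to be roughly the smallest index at which the data diameter of $\A^{i_0}$ drops below $2\zeta\Delta_n(\X)$, and verify that the partition size at that step is still bounded by $\zeta^{-k}$. Two structural properties of the sequence $(\A^i)$ drive everything. First, the subprocedure call $\mathtt{coreRPtree}(A,\diameter{A}/2,\delta,l)$ guarantees that the leaves of the returned subtree form a partition of $A$ with data diameter $\leq \diameter{A}/2$, which yields $\diameter{\A^{i+1}}\leq \diameter{\A^i}/2$ and hence by induction $\diameter{\A^i}\leq 2^{-i}\diameter{\X}$. Second, under the assumed diameter decrease rate $k$, each such subtree has depth at most $k$, so every cell of $\A^{i-1}$ is split into at most $2^k$ cells of $\A^i$, giving $\abs{\A^i}\leq 2^{ki}$.

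With these in hand I would set $i_0 \doteq \max\{0,\lceil \log(1/(2\zeta))\rceil\}$. The diameter condition then follows directly from the halving bound: $\diameter{\A^{i_0}}\leq 2^{-i_0}\diameter{\X}\leq 2\zeta\diameter{\X}$. For the size condition, a short case analysis gives $i_0\leq \log(1/\zeta)$, whence $\abs{\A^{i_0}}\leq 2^{k i_0}\leq 2^{k\log(1/\zeta)}=\zeta^{-k}$. To make both statements meaningful I first need $\zeta\leq 1$, and indeed the hypothesis $n\geq (\lambda\Delta_\X/\Delta_\Y)^2$ gives $\lambda^2\Delta_\X^2\leq \Delta_\Y^2 n$, which combined with $n\geq \alpha(n)$ yields
\begin{equation*}
\zeta^{2+k}=\frac{\Delta_\Y^2\,\alpha(n)}{\lambda^2\Delta_\X^2\, n}\leq \frac{\alpha(n)}{n^2}\leq \frac{1}{n}\leq 1.
\end{equation*}

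The main subtlety, which I expect to be the key obstacle, is ensuring that the algorithm actually produces $\A^{i_0}$ rather than halting earlier due to its level stopping condition $\lev{\A^i}\geq \log n^2$. I would split into two cases. If the algorithm halts at some $i^*$ with $\diameter{\A^{i^*}}=0$, then necessarily $i^* \leq i_0$ as defined above, and the pair of bounds still holds at $i^*$ (the diameter bound is trivial and $\abs{\A^{i^*}}\leq 2^{k i^*}\leq \zeta^{-k}$). Otherwise I have to verify $k i_0 < \log n^2 = 2\log n$ so that $\lev{\A^{i_0}}\leq k i_0$ does not yet trigger the level cutoff. The same estimate used above gives $\zeta^{2+k}\geq 1/n^2$ when $\lambda^2\Delta_\X^2 n/(\Delta_\Y^2\alpha(n))\leq n^2$, so $\log(1/\zeta)\leq 2\log n/(2+k)$, and therefore $k i_0\leq k\log(1/\zeta)\leq 2k\log n/(2+k)<2\log n$. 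Once this inequality is secured, the two claimed bounds drop out immediately and the proof is complete.
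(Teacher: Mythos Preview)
Your overall strategy is sound and in fact a bit more direct than the paper's: you pick $i_0=\max\{0,\lceil\log(1/(2\zeta))\rceil\}$ explicitly and verify the two bounds, whereas the paper first argues that at stopping the final partition has diameter at most $\zeta\,\Delta_n(\X)$, then lets $j$ be the \emph{first} index with $\Delta_n(\A^j)\le\zeta\,\Delta_n(\X)$ and case-splits on whether $\lev{\A^j}\le k\log(1/\zeta)$, falling back to $\A^{j-1}$ if not. Both routes rest on the same two structural facts, $\Delta_n(\A^i)\le 2^{-i}\Delta_n(\X)$ and $\lev{\A^i}\le ki$, and your check that the level cutoff $\log n^2$ is not reached before step $i_0$ is exactly the inequality $k\log(1/\zeta)<2\log n$ that the paper uses in its first paragraph.

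There is, however, a concrete algebraic slip. From $n\ge(\lambda\Delta_\X/\Delta_\Y)^2$ one gets $\Delta_\Y^2/(\lambda^2\Delta_\X^2)\ge 1/n$, and hence
\[
\zeta^{2+k}=\frac{\Delta_\Y^2\,\alpha(n)}{\lambda^2\Delta_\X^2\,n}\;\ge\;\frac{\alpha(n)}{n^2},
\]
not $\le$. Your displayed chain for $\zeta\le 1$ therefore has the inequality reversed, and in fact $\zeta\le 1$ does \emph{not} follow from the stated hypotheses alone (take $\Delta_\Y$ very large). The correct direction is precisely the \emph{lower} bound you need later, $\zeta^{2+k}\ge n^{-2}$, i.e.\ $\log(1/\zeta)\le 2\log n/(2+k)$; you already invoke this correctly when verifying $ki_0<2\log n$. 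The paper's argument also tacitly requires $\zeta\le 1$ for the conclusion $|\A^{i_0}|\le\zeta^{-k}$ to be nonvacuous, so this is a shared implicit assumption rather than a flaw peculiar to your proof---but your derivation of it is wrong and should be dropped.

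A minor wording point: the claim ``necessarily $i^*\le i_0$'' when the algorithm halts with $\Delta_n(\A^{i^*})=0$ is neither justified nor needed. What you actually require (and correctly use) is that if the diameter-zero halt occurs at some $i^*<i_0$, then $|\A^{i^*}|\le 2^{ki^*}\le 2^{ki_0}\le\zeta^{-k}$; if instead $i^*\ge i_0$, then $\A^{i_0}$ was already produced and you are done.
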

\begin{proof}
Let $i\geq 0$. We have by definition that $\diameter{\A^i}\leq 2^{-i}\diameter{\X}$, while it follows from the assumption on diameter decrease rate that $\lev{\A^i}\leq ki$. Now let $\A^i$ be the last partition of $\X$ achieved by $\mathtt{adaptiveRPtree}$ when the stopping criteria holds. We have either that $\Delta_n(\A^i) = 0 < \zeta\cdot\diameter{\X}$, or 
$$ki\geq \lev{\A^i}\geq \log n^2 \geq k\log n^{2/(k+2)} \geq k \log 1/\zeta,$$ implying that
$\diameter{\A^i} \leq 2^{-i}\cdot\diameter{\X} \leq \zeta\cdot\diameter{\X}$.
\par 
Now, let $j\in{1, \ldots, i}$ be the first $j$ such that $\diameter{\A^j}\leq \zeta\cdot\diameter{\X}$. We consider the following two cases:
\begin{itemize}
 \item Either $\lev{\A^j}\leq \log \zeta^{-k}$, and we get $\abs{\A^j}\leq\zeta^{-k}$.
 \item Or $\lev{\A^j}>\log \zeta^{-k}$ in which case the following must hold:
\begin{itemize}
 \item $\diameter{\A^{j-1}}\leq 2\zeta\cdot\diameter{\X}$, since 
$kj\geq \lev{\A^j}\geq k \log1/\zeta$, implying that $j-1\geq\log(1/2\zeta)$.
\item $\lev{\A^{j-1}}< \log\zeta^{-k}$, for otherwise $j-1\geq \log 1/\zeta$ implying that $\diameter{\A^{j-1}}\leq \zeta\diameter{\X}$. It follows that $\abs{\A^{j-1}}\leq\zeta^{-k}$
\end{itemize}
\end{itemize}
Thus, either $\A^j$ or $\A^{j-1}$ satisfies the claim.
\end{proof}

\begin{lemma}
There exists a constant $C$ independent of $d$ and $\mu(\X)$, such that the following holds with probability at least $1- 2\delta/3$ over $(\Xspl, \Yspl)$ and the randomness in the algorithm. 
\par
Suppose the cross-validation option is used, and procedure $\mathtt{adaptiveRPtree}$ attains a diameter decrease rate of $k\geq d$ on $\Xspl$. Define $$\alpha(n)\doteq\paren{\log^2 n}\log\log(n/\delta)+ \log (1/\delta),$$ and assume $n\geq \max \left\{\paren{{\lambda\Delta_\X}/{\Delta_\Y}}^2, \alpha(n)\right\}$. The excess risk of the regressor is then bounded as
\begin{eqnarray*}
 \norm{f_n - f}^2 &\leq& C\cdot\paren{\lambda\Delta_\X}^{2k/(2+k)}\paren{\Delta_{\Y}^2\cdot\frac{\alpha(n)}{n}}^{2/(2+k)} \\
&\,& + 2\Delta_\Y^2\sqrt{\frac{\ln\log n^6 + \ln 3/\delta}{2n}}.
\end{eqnarray*}
\label{lem:riskCVoption}
\end{lemma}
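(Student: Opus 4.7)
The plan is to combine two ingredients: an oracle risk bound obtained by applying Lemma \ref{lem:risk} to the good partition produced by Lemma \ref{lem:partition01}, and a concentration argument showing that the cross-validation rule selects a partition nearly as good as this oracle. The overall failure probability will be split between the $\delta/3$ event of Lemma \ref{lem:risk} and a further $\delta/3$ event for the concentration step.

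I would first instantiate Lemma \ref{lem:partition01} with the stated $\zeta = \paren{\Delta_\Y^2 \alpha(n)/(\lambda^2 \diamX n)}^{1/(2+k)}$ to produce a candidate $\A^{i_0} \in \{\A^0, \ldots, \A^i\}$ with $\diam{\A^{i_0}} \leq 4\zeta^2 \diamX$ and $\abs{\A^{i_0}} \leq \zeta^{-k}$. Plugging these two bounds into Lemma \ref{lem:risk} and simplifying, the variance term $\Delta_\Y^2 \abs{\A^{i_0}} \alpha(n)/n$ and the bias term $\lambda^2 \diam{\A^{i_0}}$ each reduce to a constant multiple of $\zeta^2 \lambda^2 \diamX = \paren{\lambda \Delta_\X}^{2k/(2+k)}\paren{\Delta_\Y^2 \alpha(n)/n}^{2/(2+k)}$, which is precisely the first term in the claim. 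The residual $\lambda^2 \diamX \cdot n^{-4/(2+d)}$ from Lemma \ref{lem:risk} is dominated by the same quantity, using $k \geq d$ together with the assumption $n \geq (\lambda \Delta_\X /\Delta_\Y)^2$ to handle the $\Delta_\Y^2$ factor.

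Next I would control the cross-validation step. Conditional on $\Xspl$ and the algorithm's internal randomness, the candidate family $\{f_{n, \A^j}\}_{j=0}^i$ is a fixed set of functions of cardinality at most $6\log n$, since $\mathtt{coreRPtree}$ caps the tree height at $6\log n$ and each $\A^j$ occupies a strictly deeper level than $\A^{j-1}$. The test data $(\Xspl', \Yspl')$ is drawn independently of this family and each pointwise squared loss lies in $[0, \Delta_\Y^2]$, so Hoeffding's inequality together with a union bound gives, with probability at least $1-\delta/3$,
$$\max_{0\le j\le i}\abs{R_n'(f_{n,\A^j}) - R(f_{n,\A^j})} \leq \Delta_\Y^2 \sqrt{\frac{\ln\log n^6 + \ln(3/\delta)}{2n}}.$$
Since $\Ar$ minimizes $R_n'$ over this family and $\A^{i_0}$ belongs to it, the standard two-sided comparison yields $R(f_{n,\Ar}) \leq R(f_{n,\A^{i_0}}) + 2\Delta_\Y^2\sqrt{(\ln \log n^6 + \ln(3/\delta))/(2n)}$. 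Subtracting $R(f)$ from both sides and substituting the oracle bound of the previous paragraph gives exactly the claimed inequality for $\norm{f_n - f}^2$.

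The main obstacle is bookkeeping rather than any deep new argument: one must verify that the family of candidate partitions really has size at most $6\log n$ (so that the union-bound penalty matches the $\ln \log n^6$ term in the statement), and that the $n^{-4/(2+d)}$ residual from Lemma \ref{lem:risk} is genuinely absorbed into the leading rate under the hypotheses $k \geq d$ and $n \geq (\lambda\Delta_\X/\Delta_\Y)^2$. Some care is also needed when combining the Lemma \ref{lem:risk} failure event with the Hoeffding failure event so that the total failure probability does not exceed the stated $2\delta/3$.
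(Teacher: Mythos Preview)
Your proposal is correct and follows essentially the same route as the paper's own proof: apply Lemma~\ref{lem:risk} to the good partition $\A^{i_0}$ supplied by Lemma~\ref{lem:partition01} (absorbing the $n^{-4/(2+d)}$ residual via $k\geq d$ and $n\geq(\lambda\Delta_\X/\Delta_\Y)^2$ exactly as you indicate), and then control cross-validation by a concentration bound plus a union bound over at most $6\log n$ candidate regressors. The only cosmetic difference is that the paper names McDiarmid's inequality for the concentration step whereas you name Hoeffding's; since $R_n'$ is an average of bounded i.i.d.\ terms, the two coincide here.
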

\begin{proof}
Let $\A^{i_0}$ be as in lemma \ref{lem:partition01}, and $\zeta\doteq \paren{\frac{\Delta_{\Y}^2\cdot\alpha(n)}{\lambda^2\diamX\cdot n}}^{1/(2+k)}$. By applying lemma \ref{lem:risk} and then lemma \ref{lem:partition01}, we have with probability at least $1-\delta/3$ that  
\begin{eqnarray*}
\norm{f_{n,\A^{i_0}} -f}^2 &\leq& C_1\left(\Delta_{\Y}^2\abs{\A^{i_0}}
\frac{\alpha(n)}{n}\right.\\
&\,& \left. + \lambda^2\paren{\diam{\A^{i_0}} + n^{-4/(2+d)}\diamX}\vphantom{\frac{\alpha(n)}{n}}\right) \\
&\leq& C_1\paren{\Delta_{\Y}^2\cdot\zeta^{-k}
\frac{\alpha(n)}{n}
+ 5\lambda^2\zeta^2\diamX} \\
&\leq& C_2 \lambda^2\diamX\zeta^2.
\end{eqnarray*}
To analyze the cross validation phase, we first fix the partition tree and consider the obtained partitions from $\A^0$ to the final partition $\A^i$ when the stopping criteria holds. We have with probability at least $1-\delta/3$ over the choice of $(\Xspl', \Yspl')$ that $\forall j \in \{0, \ldots, i\}$
\begin{equation*}
 \abs{R\paren{f_{n, \A^j}} - R_n'\paren{f_{n, \A^j}}}
\leq \Delta_\Y^2\sqrt{\frac{\ln\log n^6 + \ln 3/\delta}{2n}}.
\end{equation*}
The above is obtained by applying McDiarmid's to the empirical risk followed by a union bound over at most $6\log n$ regressors $f_{n, \A^j},j \in \{0, \ldots, i\}$.
\par 
Let $f_n \doteq f_{n,\Ar}$ be the empirical risk minimizer, we can then conclude that 
$$\norm{f_{n} - f}^2\leq C_2 \lambda^2\diamX\zeta^2 + 2\Delta_\Y^2\sqrt{\frac{\ln\log n^6 + \ln 3/\delta}{2n}}$$ with probability at least $1-2\delta/3$. 
\end{proof} 

\subsection{Risk bound for automatic stopping option}
 
\begin{lemma}[Properties of $\Ar$]
\label{lem:partition02}
Suppose the automatic stopping option is used, and that $\mathtt{adaptiveRPtree}$ attains a diameter decrease rate of $k$ on $\Xspl$. Define $$\alpha(n)\doteq\paren{\log^2 n}\log\log(n/\delta)+ \log (1/\delta),$$ and $\zeta \doteq \paren{\frac{\alpha(n)}{n}}^{1/(2+k)}$. Finally, assume $n \geq \alpha(n)$. Then, the following holds for the final partition $\Ar$ retained for regression: 
\begin{eqnarray*}
\paren{\frac{\alpha(n)}{n}\cdot\abs{\Ar} + \diam{\Ar}} \leq \paren{4\diam{\X} + 1}\zeta^2.
\end{eqnarray*}
\end{lemma}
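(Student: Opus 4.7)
The plan is to locate the balance index $j$ (smallest $i$ with $\diameter{\A^j}\leq\zeta\cdot\diameter{\X}$) and show that the automatic stopping rule brackets it, so that one of $\A^{j-1},\A^j$ (or the companion $\A^{i^*-1}$ when the algorithm overshoots) lies in the comparison set $\{\A^{i^*-1},\A^{i^*}\}$ from which $\Ar$ is selected. Write $\phi(\A)\doteq (\alpha(n)/n)\abs{\A}+\diam{\A}$ for the criterion minimized to produce $\Ar$, and let $i^*$ denote the step at which the stopping condition first fires (it must fire eventually, as noted in the opening paragraph of Section~\ref{sec:risk}).

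The first step is to show $i^*\geq j$. For every $i<j$, minimality of $j$ yields $\diam{\A^i}>\zeta^2\diam{\X}$, so the threshold in the stopping inequality satisfies $\log\paren{n\diam{\A^i}/(\alpha(n)\diam{\X})}>\log(n\zeta^2/\alpha(n))=k\log(1/\zeta)$ (using $\zeta^{2+k}=\alpha(n)/n$). On the other hand, $\diameter{\A^i}\leq 2^{-i}\diameter{\X}$ combined with $\diameter{\A^i}>\zeta\diameter{\X}$ gives $i<\log(1/\zeta)$, so by the diameter decrease rate $\lev{\A^i}\leq ki<k\log(1/\zeta)$. Thus the stopping condition fails at every $i<j$.

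Next I would case-split on $i^*-j$. If $i^*>j$, monotonicity of data diameters gives $\diameter{\A^{i^*-1}}\leq\diameter{\A^j}\leq\zeta\diameter{\X}$, hence $\diam{\A^{i^*-1}}\leq\zeta^2\diam{\X}$. Since the stopping condition failed at $i^*-1$, we have $(\alpha(n)/n)\abs{\A^{i^*-1}}\leq (\alpha(n)/n)\,2^{\lev{\A^{i^*-1}}}<\diam{\A^{i^*-1}}/\diam{\X}$, which yields $\phi(\A^{i^*-1})<\diam{\A^{i^*-1}}(1+1/\diam{\X})\leq (\diam{\X}+1)\zeta^2$. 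If instead $i^*=j$, then $\Ar$ ranges over $\{\A^{j-1},\A^j\}$, and I would replay the two-case argument of Lemma~\ref{lem:partition01}: when $\lev{\A^j}\leq k\log(1/\zeta)$ we get $\abs{\A^j}\leq\zeta^{-k}$ and thus $\phi(\A^j)\leq (1+\diam{\X})\zeta^2$; when $\lev{\A^j}>k\log(1/\zeta)$, the bound $\lev{\A^j}\leq kj$ forces $j-1\geq\log(1/(2\zeta))$, so $\diameter{\A^{j-1}}\leq 2\zeta\diameter{\X}$, while minimality of $j$ forces $\lev{\A^{j-1}}<k\log(1/\zeta)$ and hence $\abs{\A^{j-1}}\leq\zeta^{-k}$, giving $\phi(\A^{j-1})\leq (1+4\diam{\X})\zeta^2$. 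In every branch the minimizer satisfies $\phi(\Ar)\leq (4\diam{\X}+1)\zeta^2$.

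The main obstacle is the first step: proving $i^*\geq j$ requires matching the scaling inside the stopping rule against both $\zeta=(\alpha(n)/n)^{1/(2+k)}$ and the diameter decrease rate. This is exactly what ensures the automatic stopping rule does not fire too early, so that the candidate set $\{\A^{i^*-1},\A^{i^*}\}$ contains a partition inheriting the variance/bias balance already exploited in the cross-validation analysis of Lemma~\ref{lem:partition01}.
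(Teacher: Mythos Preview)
Your proof is correct, but it takes a more elaborate route than the paper's. The paper never introduces the balance index $j$ or proves $i^*\geq j$; instead it case-splits directly at the stopping index on whether $\lev{\A^{i^*}}\leq \log\zeta^{-k}$. In the small-level case the stopping condition itself yields $\diam{\A^{i^*}}\leq (\alpha(n)/n)\,2^{\lev{\A^{i^*}}}\diam{\X}\leq \zeta^2\diam{\X}$ and $|\A^{i^*}|\leq\zeta^{-k}$; in the large-level case the paper shows $\diameter{\A^{i^*-1}}\leq 2\zeta\diameter{\X}$ from $k i^*\geq\lev{\A^{i^*}}$, and bounds $\lev{\A^{i^*-1}}<\log\zeta^{-k}$ by arguing that otherwise the stopping rule would already have fired at $i^*-1$. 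So the paper's entire argument is essentially your $i^*=j$ branch, with the ``minimality of $j$'' step replaced by the ``otherwise we would have stopped earlier'' step.

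Your extra machinery does buy something small: in your $i^*>j$ branch you bound $\phi(\A^{i^*-1})$ purely from the \emph{failure} of the stopping test at $i^*-1$, without invoking the decrease rate $k$ at all, and you obtain the slightly sharper constant $(\diam{\X}+1)$ there. But since the $i^*=j$ branch still needs the full $(4\diam{\X}+1)$, this does not improve the lemma. Conversely, the paper's version is shorter because it avoids the preliminary ``stopping cannot fire before the balance index'' argument altogether. Both approaches share the same minor edge case (the appeal to step $i^*-1$ when $i^*=1$), so neither is more rigorous on that front.
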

\begin{proof}
For $i\geq 0$, let $\A^i$ as defined in $\mathtt{adaptiveRPtree}$. We have by definition that $\diameter{\A^i}\leq 2^{-i}\diameter{\X}$, while it follows from the assumption on diameter decrease rate that $\lev{\A^i}\leq ki$. Now for some $i\geq 1$, let $\A^i$ be the final partition of $\X$ achieved by $\mathtt{adaptiveRPtree}$ when the stopping criteria holds.
We consider the following two cases:
\begin{itemize}
 \item Either $\lev{\A^i}\leq \log\zeta^{-k}$, and we have by the stopping condition that:
\begin{eqnarray*}
\diam{\A^i}&\leq& \frac{\alpha(n)}{n}2^{\lev{\A^i}}\cdot\diam{\X} \\
&\leq& \frac{\alpha(n)}{n}\zeta^{-k}\diam{\X} = \zeta^2\diam{\X}.
\end{eqnarray*}
\item Or $\lev{\A^i}> \log{\zeta^{-k}}$, in which case the following must hold:
\begin{itemize}
\item $\diameter{\A^{i-1}} \leq 2\zeta\cdot\diameter{\X}$, since 
$ki\geq \lev{\A^i}\geq k \log(1/\zeta)$, implying that $i-1\geq\log(1/2\zeta)$.
\item $\lev{\A^{i-1}}< \log\zeta^{-k}$, for otherwise we would have stopped at $i-1$. To see this, assume instead that $\lev{\A^{i-1}} \geq \log\zeta^{-k}$: we have  
that $(i-1) \geq \log\frac{1}{\zeta}$ and subsequently that 
\begin{eqnarray*}
 \diam{\A^{i-1}}&\leq& 2^{-2(i-1)}\diam{\X} \leq \zeta^2\diam{\X} \\
&=& \frac{\alpha(n)}{n}\cdot\zeta^{-k}\cdot\diam{\X}\\
&\leq&  \frac{\alpha(n)}{n}2^{\lev{\A^{i-1}}}\cdot\diam{\X}.
\end{eqnarray*}
In other words, $$\lev{\A^{i-1}} \geq \log\left(n\diam{\A^{i-1}}/\alpha(n)\diam{\X}\right).$$
\end{itemize}

\end{itemize}

In either case at least one of $\A^i$ and $\A^{i-1}$ has size at most $\zeta^{-k}$ and diameter at most $2\zeta\cdot\Delta_{\X}$. It follows that
\begin{eqnarray*}
&\,& \min_{j \in \left\{i-1,\, i\right\}} \paren{\frac{\alpha(n)}{n}\cdot \abs{\A^j} + \diam{\A^j}} \leq \\
&\,& \frac{\alpha(n)}{n}\cdot \zeta^{-k} 
 + 4\zeta^2\cdot\diam{\X} 
= \paren{4\diam{\X} + 1}\zeta^2,
\end{eqnarray*}
which concludes the argument.
\end{proof}

\begin{lemma}
 There exists a constant $C$ independent of $d$ and $\mu(\X)$, such that the following holds with probability at least $1- \delta/3$ over $(\Xspl, \Yspl)$ and the randomness in the algorithm. 
\par
Suppose the automatic stopping option is used; assume $\mathtt{adaptiveRPtree}$ attains a diameter decrease rate of $k\geq d$ on $\Xspl$. Define $\alpha(n)\doteq\paren{\log^2 n}\log\log(n/\delta)+ \log (1/\delta)$. The excess risk of the regressor is then bounded as
\begin{eqnarray*}
 \norm{f_n - f}^2 \leq C\cdot\paren{\Delta_{\Y}^2 + \lambda^2}(\diamX +1)\cdot\paren{\frac{\alpha(n)}{n}}^{2/(2+k)}.
\end{eqnarray*}
\label{lem:riskASoption}
\end{lemma}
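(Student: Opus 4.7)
The plan is to combine the generic excess risk bound of Lemma \ref{lem:risk} with the structural properties of the returned partition $\Ar$ established in Lemma \ref{lem:partition02}. Since $\Ar$ belongs to the family $\{\A^j\}_{j=0}^i$ of partitions grown by $\mathtt{adaptiveRPtree}$, Lemma \ref{lem:risk} applies directly and yields, with probability at least $1-\delta/3$,
\begin{equation*}
\norm{f_{n,\Ar}-f}^2 \;\leq\; C_1\paren{\Delta_{\Y}^2\,\abs{\Ar}\,\frac{\alpha(n)}{n} \;+\; \lambda^2\paren{\diam{\Ar} + n^{-4/(2+d)}\diamX}}.
\end{equation*}

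Next, setting $\zeta \doteq (\alpha(n)/n)^{1/(2+k)}$ as in Lemma \ref{lem:partition02}, that lemma gives immediately
\begin{equation*}
\frac{\alpha(n)}{n}\,\abs{\Ar} + \diam{\Ar} \;\leq\; (4\diamX+1)\,\zeta^2.
\end{equation*}
Multiplying separately by $\Delta_\Y^2$ and $\lambda^2$ and summing bounds the first two summands in the risk decomposition by $C_1(\Delta_\Y^2+\lambda^2)(4\diamX+1)\zeta^2$. The only remaining piece is the residual bias term $\lambda^2 n^{-4/(2+d)}\diamX$. Here I use the hypothesis $k\geq d$, which forces $2/(2+k)\leq 2/(2+d)\leq 4/(2+d)$, so that
\begin{equation*}
n^{-4/(2+d)} \;\leq\; n^{-2/(2+k)} \;\leq\; \alpha(n)^{2/(2+k)}\,n^{-2/(2+k)} \;=\; \zeta^2.
\end{equation*}

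Putting the three estimates together, the entire right hand side is dominated by a constant multiple of $(\Delta_{\Y}^2+\lambda^2)(\diamX+1)\zeta^2$, which is the claimed bound after folding constants into $C$. The main (and only) subtlety I anticipate is precisely the handling of the residual $n^{-4/(2+d)}\diamX$ term: one must notice that the assumption $k\geq d$ built into the statement is exactly what is needed to absorb it into $\zeta^2$; everything else is a direct plug-in of Lemmas \ref{lem:risk} and \ref{lem:partition02}, with the probability $1-\delta/3$ inherited from Lemma \ref{lem:risk} (no test sample is drawn in the automatic stopping option, so there is no additional failure event to union-bound over).
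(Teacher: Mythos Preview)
Your proposal is correct and follows essentially the same route as the paper: apply Lemma~\ref{lem:risk}, plug in the size/diameter control from Lemma~\ref{lem:partition02}, and absorb the residual $n^{-4/(2+d)}\diamX$ term into $\zeta^2$ using $k\geq d$. The only point you leave implicit is the degenerate regime $n\leq\alpha(n)$: Lemma~\ref{lem:partition02} carries the hypothesis $n\geq\alpha(n)$, so you should note (as the paper does) that when $n\leq\alpha(n)$ the claimed bound holds vacuously since $(\alpha(n)/n)^{2/(2+k)}\geq 1$ while $\norm{f_n-f}^2\leq\Delta_\Y^2$.
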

\begin{proof}
For $n \leq \alpha(n)$, the bound on the excess risk holds vacuously. We assume henceforth that $n > \alpha(n)$. Let $\zeta \doteq \paren{\frac{\alpha(n)}{n}}^{1/(2+k)}$. By first applying lemma \ref{lem:risk} then lemma \ref{lem:partition02}, we have with probability at least $1-\delta/3$ that 
\begin{eqnarray*}
\norm{f_{n,\Ar} -f}^2 &\leq& C_1\left(\Delta_{\Y}^2\abs{\Ar}
\frac{\alpha(n)}{n} \right.\\
&\,&  \left. + \lambda^2\paren{\diam{\Ar} + n^{-4/(2+d)}\diamX}\vphantom{\frac{\alpha(n)}{n}}\right)\\
&\leq& C_1\paren{\Delta_{\Y}^2 + \lambda^2} \left(\abs{\Ar}
\frac{\alpha(n)}{n}\right.\\
&\,&  \left. +\paren{\diam{\Ar} + n^{-4/(2+d)}\diamX}\vphantom{\frac{\alpha(n)}{n}}\right)\\
&\leq& C_1\paren{\Delta_{\Y}^2 + \lambda^2}\paren{\paren{4\diamX +1}\zeta^2 + \zeta^2\diamX}\\
&\leq& C\paren{\Delta_{\Y}^2 + \lambda^2}\paren{\diamX +1}\zeta^2, 
\end{eqnarray*}
which concludes the argument.
\end{proof}

\section{Core RPtree and diameter decrease rates}
\label{sec:decreaserates}

\subsection{Core RPTree procedures}
\label{sec:coreRPtree}
\begin{procedure}[H]
$\A_0\leftarrow \{A_0\}$\;
\For{$i\leftarrow 1$ \KwTo $\infty$}{
\If{$\diameter{\A_{i-1}} \leq \Delta$}{
\Return\;
}
Choose a random direction $v \sim \mathcal{N}\paren{0, \frac{1}{D}I_D}$\;
Choose a random $\tau\sim \mathcal{U}[-1, 1]\cdot \frac{6}{\sqrt{D}} \Delta_n(A_0)$\;
\vspace{1mm}
  \ForEach{cell $A\in\A_{i-1}$} {
\eIf{$(l+i)$ is odd} {
\CommentSty{// Noisy splits.}\\
$t \leftarrow \text{median}\{z^\Tr v: z \in \Xspl\cap A_0\} + \tau$\;
}
{
\CommentSty{// Median splits.}\\
$t\leftarrow\text{median}\{z^\Tr v: z \in \Xspl\cap A\}$\;
}
$A_{\text{left}} \leftarrow \{x\in A,\, x^\Tr v\leq t\}$\;
$A_{\text{right}} \leftarrow A\setminus A_{\text{left}}$\;
}
$\A_i\leftarrow$ partition of $A_0$ defined by the leaves of the current tree\;
}
\caption{basicRPtree($A_0\subset\X$, $\Delta$, level $l$)\label{proc:1}}
\end{procedure}

\begin{procedure}[H]
\SetKwFunction{rptree}{basicRPtree}
 Call \rptree{$A_0, \Delta, l$} $\log\paren{{6n^2}/{\delta}}$ times and return the shortest tree.
 \caption{coreRPtree($A_0\subset\X$, $\Delta$, $\delta$, level $l$)}
\end{procedure}


RPtree consists of hierarchically bisecting the data space with random hyperplanes. In $\mathtt{basicRPtree}$ we alternate between two types of bisections: we split exactly at the median in order to balance the tree, while we split at the median + noise to improve the rate at which the data diameters are reduced down the tree. Notice that for the ``noisy'' split we use the same hyperplane to bisect all nodes $A\in \A_{i-1}$.
\par 
The procedure $\mathtt{coreRPtree}$ serves to boost the probability that we get a small tree. The many calls to $\mathtt{basicRPtree}$ can be done in parallel so that we don't keep growing the trees that are to be discarded once the smallest tree is identified.

\begin{remark}
\label{rem:treeSize}
Given the implementation of $\mathtt{coreRPtree}$, the tree returned by $\mathtt{adaptiveRPtree}$ has the following properties:
\begin{itemize}
 \item Any node at level $6\log n$ has at most $1$ data point: the data is split at the exact median at every other level so that the number of points per nodes decreases exponentially from the root down. If $n$ were a power of $2$, we'd need at most $2\log n$ levels to get to $1$ point per node. For general $n$, notice that the number of points in a node at level $i\geq 2$ is at most $\frac{3}{4}$ of that of its ancestor at level $i-2$. In other words we need at most $2\log n/\log(4/3)\leq 6\log n$ levels to get down to $1$ point per node.
\item As a consequence, the entire tree reaches depth at most $6\log n$ under either stopping criteria, and therefore has at most $2n^6$ nodes.
\item Another consequence is that at most $2n^6\log (6n^2/\delta)$ random directions are required to build the entire tree.
\end{itemize}

\end{remark}

\subsection{Worst case decrease rates}
In this section we consider worst case bounds for the diameter decrease rates attainable by the algorithm over supports of low intrinsic dimension.
\par
The following theorem, adapted from Dasgupta and Freund \cite{DF:59}, is the core of the argument.

\begin{theorem}
 \label{theo:rptree}
Let $A\subset \real^D$ and suppose $A\cap\Xspl$ has Assouad dimension $d$. There exists a constant $C'$ independent of the sample $\Xspl$ and $d$, with the following property. We have with probability at least $\frac{1}{2}$ that the tree rooted at $A$ returned by the call $\mathtt{basicRPtree}(A, \diameter{A}/2, l)$ has depth at most $C' d\log d$.
\end{theorem}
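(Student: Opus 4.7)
My plan is to adapt the analysis of Dasgupta and Freund for the original RPtree on doubling sets: track a potential on the current partition and show that it contracts by a factor $1 - \Omega(1/d)$ with constant probability at each noisy split.

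First I would introduce the potential $\Phi(\A_i) \doteq \sum_{A\in \A_i}\mu_n(A)\diam{A}$, which up to a total-mass normaliser coincides with $\diam{\A_i}$. Since the procedure terminates as soon as $\diameter{\A_i}\leq \diameter{A}/2$, it suffices to drive $\Phi$ below $\Phi(\A_0)/4$. I would then recall the geometric lemma at the heart of Dasgupta-Freund: for $v \sim \mathcal{N}(0, I_D/D)$ and a fixed pair $x,x'\in \real^D$, the projected gap $(x-x')^\Tr v$ is Gaussian with variance $\norm{x-x'}^2/D$. Combined with a cover of $A\cap \Xspl$ by $\paren{\diameter{A}/r}^{O(d)}$ balls of radius $r$ (from the Assouad property), this lets us compare the projection on within-cover-ball pairs versus across-cover-ball pairs, so that the effective variance scale of projected distances is set by the intrinsic $d$ rather than the ambient $D$.

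The main technical step is the per-level contraction. For a single cell $A'$ in the current partition (whose $A'\cap\Xspl$ still has Assouad dimension at most $d$), I would analyse the noisy split at an odd-parity level and argue that, with absolute constant probability over $v$ and the noise $\tau$, the split falls inside the bulk of the projected point cloud and separates pairs whose true distance exceeds $\Omega(\diameter{A'}/\sqrt{d})$, so that
\[
\mu_n(A'_{\text{left}})\diam{A'_{\text{left}}} + \mu_n(A'_{\text{right}})\diam{A'_{\text{right}}} \leq \paren{1 - c/d}\mu_n(A')\diam{A'}
\]
for an absolute $c>0$. Summing over cells yields $\ex{\Phi(\A_{i+1})} \leq (1-c/d)\,\Phi(\A_i)$; using a common $v,\tau$ across all cells at the same level is still fine because the cell-wise contraction holds for each cell individually. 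The interleaved median splits do not necessarily reduce $\Phi$, but they keep the tree balanced so that each cell retains enough points for the concentration arguments to remain valid throughout the recursion.

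To finish, I would iterate and apply Markov's inequality: after $T$ levels $\ex{\Phi(\A_T)} \leq (1-c'/d)^T \Phi(\A_0)$, and choosing $T = C'd\log d$ for a sufficiently large $C'$ forces this expectation well below $\Phi(\A_0)/8$, so that $\Phi(\A_T) \leq \Phi(\A_0)/4$ holds with probability at least $1/2$. The $\log d$ overhead in the depth bound is exactly the gap between the per-step contraction $(1-1/d)$ and the target constant-factor reduction in squared data diameter. \textbf{The hard part will be} the per-level contraction: establishing that a random noisy split on a cell of Assouad dimension $d$ really achieves a $(1-\Omega(1/d))$ bound on the potential, rather than the weaker $(1-\Omega(1/D))$ that would be dictated by the ambient dimension alone; this is where the fine structure of the noisy offset (median plus uniform noise at scale $\diameter{A_0}/\sqrt{D}$) must be combined with the doubling cover to transfer all randomness scales from $D$ to $d$.
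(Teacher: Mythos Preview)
Your route is genuinely different from the paper's, and the difference matters because your key step does not go through as stated.

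The paper's argument is not potential-based. It fixes an $r$-cover of $A\cap\Xspl$ with $r=\Delta_n(A)/(512\sqrt d)$ and considers the $O(d)^d$ pairs of cover balls whose centres are at least $\tfrac12\Delta_n(A)-2r$ apart; the procedure has halved the data diameter once no leaf contains points from both balls of any such pair. Lemma~9 of Dasgupta--Freund says a single noisy split separates a \emph{fixed} far pair with constant probability, so the chance that a given pair survives $i$ noisy levels decays like $c^i$; a union bound over the $O(d)^d$ pairs then forces depth $O(d\log d)$. The $\log d$ factor is the logarithm of the number of pairs, not an artefact of a $(1-1/d)$ contraction.

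Your contraction step asserts that one noisy split, with absolute constant probability, ``separates pairs whose true distance exceeds $\Omega(\Delta_n(A')/\sqrt d)$'' and hence contracts $\Phi$ by $(1-c/d)$. Two problems. First, a single random hyperplane cannot separate \emph{all} far pairs with constant probability: pairs that are far but nearly orthogonal to $v$ land on the same side. Dasgupta--Freund's lemma delivers constant separation probability only for one fixed pair at a time, which is exactly why the paper must union-bound. Second, the potential argument you are invoking (expected $(1-c/d)$ contraction) is the \emph{RPtree-Mean} analysis, which tracks the \emph{average} squared interpoint distance under a \emph{local covariance dimension} hypothesis and a different splitting rule; transplanting it to the max-diameter potential under Assouad dimension with the median-plus-noise split is the entire difficulty, and you have not said how to do it. Note also that if your contraction did hold, $O(d)$ levels would already push $\Phi$ below $\Phi(\A_0)/4$, so your own accounting of where the $\log d$ comes from is inconsistent with the inequality you wrote. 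Finally, the noisy split in $\mathtt{basicRPtree}$ uses the median of the \emph{root} $A_0$ (the same hyperplane across all cells at that level), not of the current cell $A'$, so your per-cell ``bulk of the projected point cloud'' reasoning does not match the actual procedure.
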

\begin{proofIdea}
The proof is a direct consequence of lemma 9 of \cite{DF:59} applied to the ``noisy'' splits at alternating levels in procedure $\mathtt{basicRPtree}$.
\par
Let $r = \Delta_n(A)/512\sqrt{d}$ and consider an $r$-cover of $A$; now consider pairs of balls $B = B(z, r)$, $B'=B(z', r)$, where $z, z'$ are in the cover and $\norm{z-z'} \geq \frac{1}{2}\Delta_n(A)-2r$. Notice that $\mathtt{basicRPtree}$ stops if for all such pairs, no leaf of the tree contains points from both $B\cap\Xspl$ and $B'\cap\Xspl$.
\par 
Fix such a pair $B$ and $B'$. By lemma 9 of \cite{DF:59}, every ``noisy'' split has a constant probability of separating $B\cap\Xspl$ and $B'\cap\Xspl$. Thus, the probability that some cell at level $i$ contains points from both $B\cap\Xspl$ and $B'\cap\Xspl$ goes down exponentially with $i$. A union bound over at most $(O(d)^d)$ such pairs yields the theorem.
\end{proofIdea}

\begin{corollary}
\label{cor:diameter-decrease-max}
Suppose $\X$ has Assouad dimension $d$. Let $C'$ be as in theorem \ref{theo:rptree}. Fix $\Xspl$. With probability at least $1-\delta/3$ over the randomness in the algorithm, the procedure $\mathtt{adaptiveRPtree}$ attains a diameter decrease rate of $k \leq C' d\log d$ on $\Xspl$.
\end{corollary}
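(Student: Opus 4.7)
The plan is to lift the per-call guarantee of Theorem \ref{theo:rptree} into a uniform guarantee across every $\mathtt{coreRPtree}$ invocation performed by $\mathtt{adaptiveRPtree}$, via the amplification built into $\mathtt{coreRPtree}$ together with a union bound.

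First, I would check that Theorem \ref{theo:rptree} applies to every cell $A$ encountered during execution. This uses only the fact that Assouad dimension is monotone under taking subsets: any ball $B \subset \real^D$ has $B \cap (A \cap \Xspl) \subseteq B \cap \X$, so a cover of $B \cap \X$ by $2^d$ half-radius balls immediately restricts to a cover of $B \cap (A \cap \Xspl)$. Hence every call $\mathtt{basicRPtree}(A, \Delta_n(A)/2, l)$ occurring in the second loop returns a subtree rooted at $A$ of depth at most $k \doteq C'd\log d$ with probability at least $1/2$.

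Next, I would exploit the boosting step wrapped around $\mathtt{basicRPtree}$ inside $\mathtt{coreRPtree}$: the latter runs $\log(6n^2/\delta)$ independent copies of $\mathtt{basicRPtree}$ and retains the shortest returned subtree. By independence, the chance that all copies fail to produce a subtree of depth $\leq k$ is at most $2^{-\log(6n^2/\delta)} = \delta/(6n^2)$. Thus each individual $\mathtt{coreRPtree}(A, \Delta_n(A)/2, \delta, l)$ call returns a subtree of depth at most $k$ with probability at least $1 - \delta/(6n^2)$.

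Finally, I would union bound the per-call failure event over every $\mathtt{coreRPtree}$ invocation made during the second loop. Each such call corresponds to expanding a leaf of the running tree into a subtree, so the number of calls is controlled by the number of internal nodes of the final tree, which in turn is governed by Remark \ref{rem:treeSize}. Plugging this count against the per-call failure probability $\delta/(6n^2)$ yields a total failure probability of at most $\delta/3$, after which all calls simultaneously produce subtrees of depth $\leq k = C'd\log d$, which is exactly the definition of a diameter decrease rate of $k$ on $\Xspl$.

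The main technical point is the calibration in the last step: the amplification factor $\log(6n^2/\delta)$ chosen inside $\mathtt{coreRPtree}$ is precisely what is needed to absorb the union bound against the number of subtree-expansion events in $\mathtt{adaptiveRPtree}$; any weaker boosting would not be sufficient to drive the overall failure probability down to $\delta/3$, so this is the step where one must be careful to count invocations tightly rather than use the loose depth-based bound $2n^6$ on the total tree size.
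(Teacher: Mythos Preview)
Your outline matches the paper's proof in structure---Assouad monotonicity, amplification via $\mathtt{coreRPtree}$, then a union bound over invocations---and steps one and two are exactly right. The gap is in the third step. You correctly see that the bound $2n^6$ from Remark~\ref{rem:treeSize} on the total tree size is too loose to close the union bound (it would give $2n^6\cdot \delta/(6n^2)=n^4\delta/3$), and you say one must ``count invocations tightly,'' but you never actually supply the tight count nor say where it comes from; pointing to Remark~\ref{rem:treeSize} and then disavowing its $2n^6$ figure leaves the argument unfinished.

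The missing observation is not about the final tree's size at all; it comes from the stopping rules of $\mathtt{adaptiveRPtree}$. Under either option, the procedure never calls $\mathtt{coreRPtree}$ on a cell whose level exceeds $\log n^2$: for cross-validation this is explicit in the stopping test $\lev{\A^i}\ge\log n^2$, and for automatic stopping the threshold $\log\!\bigl(n\,\Delta_n^2(\A^i)/(\alpha(n)\Delta_n^2(\X))\bigr)$ is at most $\log n$. Hence every cell on which $\mathtt{coreRPtree}$ is invoked sits at some level $\le\log n^2$, and a binary tree contains at most $2n^2$ nodes at such levels. That is the count which, multiplied by the per-call failure $\delta/(6n^2)$, yields $\delta/3$. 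The calibration you flag in your last paragraph is really between the boosting factor and the stopping-rule level caps, not the overall tree depth; Remark~\ref{rem:treeSize} is the wrong place to look.
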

\begin{proof}
Consider a subtree rooted at $A$ returned by the call $\mathtt{coreRPtree}(A, \diameter{A}/2, \delta, l)$ in the second loop of procedure $\mathtt{adaptiveRPtree}$. Since $\X$ has Assouad dimension $d$, $A\cap\Xspl$ also has Assouad dimension $d$ by definition so theorem \ref{theo:rptree} holds.
\par 
Procedure $\mathtt{coreRPtree}$ calls $\mathtt{basicRPtree}$ as many as $\log \paren{6n^2/\delta}$ times and returns the smallest tree; thus the probability that the subtree rooted at $A$ has depth over $C' d\log d$ is at most $\delta/6n^2$. Now, under both stopping conditions, $\mathtt{coreRPtree}$ is only called on nodes at level at most $\log n^2$; a union bound over all such nodes (at most $2n^2$) yield a probability of failure at most $\delta/3$. 
\end{proof}

\section{Final Remarks}
We have shown in this paper that an RPtree regressor will perform well in a scenario where the data space $\X$ has low Assouad dimension $d<\!\!<D$. 
\par 
Our results are easily extended to other settings. We can for example consider a scenario where the data has low Assouad dimension $d$ at small resolution but ``fills'' up space at higher resolution. One may think for instance of a Hilbert space filling curve where balls of small enough radius have low Assouad dimension relative to the entire space.
 (see figure \ref{fig:spaceFilling}).
\begin{figure}
\centering
\includegraphics[height=4cm]{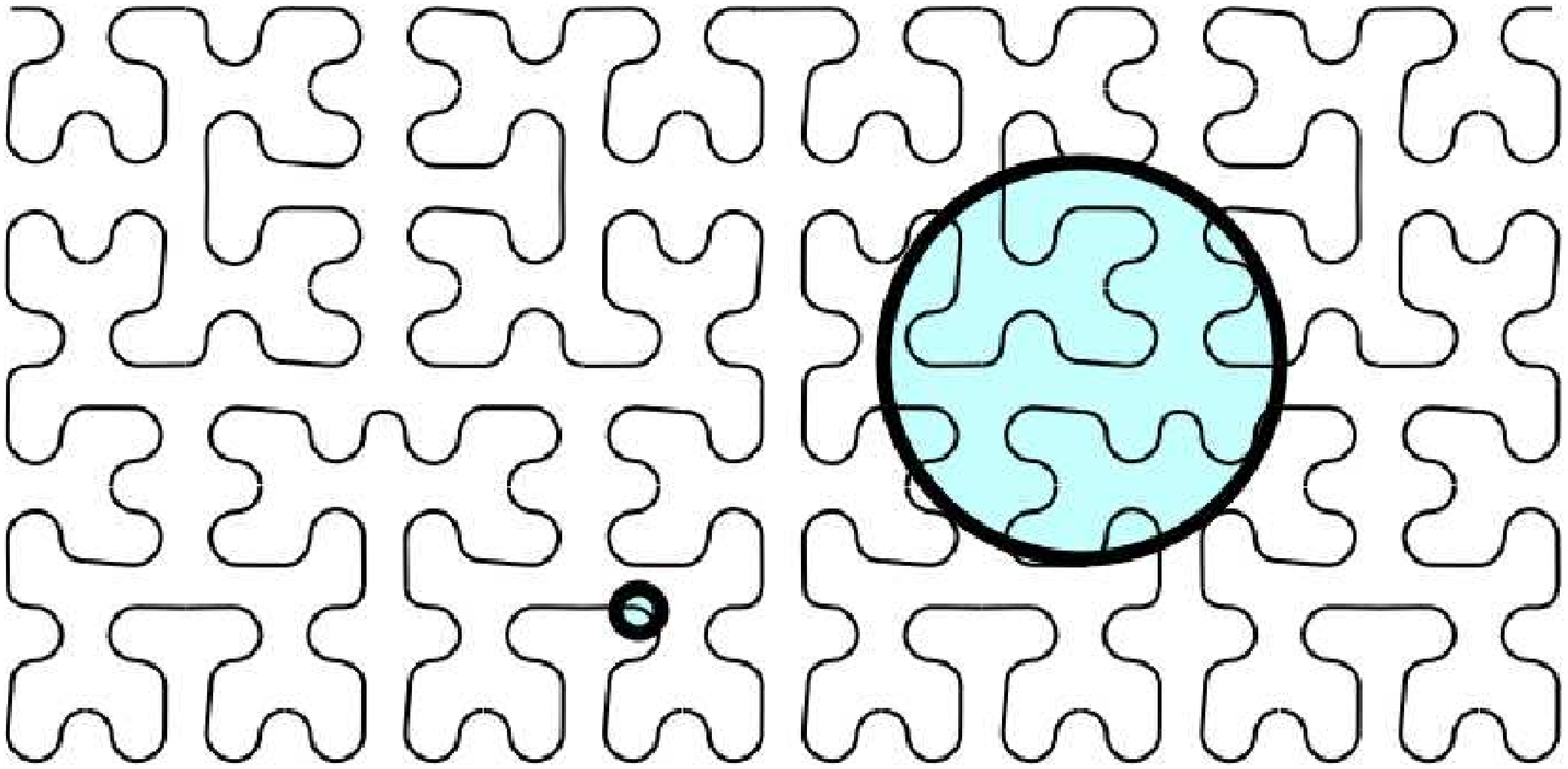} 
\caption{Hilbert space filling curve, balls of smaller radius have lower Assouad dimension.}
\label{fig:spaceFilling}
\end{figure}
RPtree in this case would initially decrease diameter at a slow rate till it arrives at small enough neighborhoods, at which time the diameter decrease rates speed up. Even in this case, the complexity of the data in larger regions of space has little effect on the final excess risk, provided $n$ is large enough for the tree to arrive at well populated regions with sufficiently small diameter. 

%
\bibliography{refs}
%

\end{document}